\title{Exact Solutions to the Quantum Schr\"odinger Bridge Problem}
\newtheorem{theorem}{Theorem}
\newtheorem{proposition}[theorem]{Proposition}
\newtheorem{definition}[theorem]{Definition}
\author{
  Mykola Bordyuh\thanks{This work was initiated at Pfizer and further developed at Bristol Myers Squibb, Discovery Biotherapeutics, Machine Learning Research group, Cambridge, MA, USA.} \\
  Machine Learning Research, Pfizer\\
  Bristol-Myers Squibb, ML Research \\
  Cambridge, Massachusetts, USA \\
  \texttt{mykola.bordyuh@bms.com} \\
\And
Djork-Arn\'e Clevert \\
Machine Learning Research\\ 
Pfizer Worldwide Research Development and Medical \\
Friedrichstraße 110, 10117 Berlin, Germany\\
\texttt{djork-arne.clevert@pfizer.com} \\
\And
Marco Bertolini \\
Machine Learning Research\\ 
Pfizer Worldwide Research Development and Medical \\
Friedrichstraße 110, 10117 Berlin, Germany\\
\texttt{marco.bertolini@pfizer.com}
}
\newcommand{\cN}{\mathcal{N}}
\newcommand{\bx}{\boldsymbol{x}}
\newcommand{\by}{\mathbf{y}}
\newcommand{\bOmegaQ}{\mathbb{Q}}
\newcommand{\bOmegaP}{\mathbb{P}}
\newcommand{\R}{\mathbb{R}}
\newcommand{\bW}{\mathbf{W}}
\newcommand{\bfb}{\mathbf{b}}
\newcommand{\dd}{\mathrm{d}}
\newcommand{\bv}{\mathbf{v}}
\newcommand{\bu}{\mathbf{u}}
\newcommand{\cL}{\mathcal{L}}
\newcommand{\p}{\partial}
\newcommand{\bSigma}{\mathbf{\Sigma}}
\newcommand{\bmu}{\boldsymbol{\mu}}
\newcommand{\tbx}{\widetilde{\bx}}
\newcommand{\ba}{\mathbf{a}}
\newcommand{\bOmega}{\boldsymbol{\Omega}}
\newcommand{\bPsi}{\boldsymbol{\Psi}}
\newcommand{\bX}{\boldsymbol{X}}
\newcommand{\bC}{\boldsymbol{C}}
\newcommand{\bK}{\boldsymbol{K}}
\newcommand{\bG}{\boldsymbol{G}}
\newcommand{\bI}{\mathbb{I}}
\newcommand{\psib}{\overline{\psi}}
\begin{document}

\maketitle

\begin{abstract}

The Quantum Schrödinger Bridge Problem (QSBP) describes the evolution of a stochastic process between two arbitrary probability distributions, where the dynamics are governed by the Schrödinger equation rather than by the traditional real-valued wave equation. Although the QSBP is known in the mathematical literature, we formulate it here from a Lagrangian perspective and derive its main features in a way that is particularly suited to generative modeling. We show that the resulting evolution equations involve the so-called Bohm (quantum) potential, representing a notion of non-locality in the stochastic process. This distinguishes the QSBP from classical stochastic dynamics and reflects a key characteristic typical of quantum mechanical systems.
In this work, we derive exact closed-form solutions for the QSBP between Gaussian distributions. Our derivation is based on solving the Fokker-Planck Equation (FPE) and the Hamilton-Jacobi Equation (HJE) arising from the Lagrangian formulation of dynamical Optimal Transport. We find that, similar to the classical Schrödinger Bridge Problem, the solution to the QSBP between Gaussians is again a Gaussian process; however, the evolution of the covariance differs due to quantum effects. Leveraging these explicit solutions, we present a modified algorithm based on a Gaussian Mixture Model framework, and demonstrate its effectiveness across several experimental settings, including single-cell evolution data, image generation, molecular translation and applications in Mean-Field Games.
\end{abstract}

\section{Introduction}

The \emph{Schrödinger Bridge Problem} (SBP), in its dynamical formulation via entropy-regularized optimal transport~\cite{cuturi2013sinkhorn}, seeks the most likely stochastic evolution that transports mass between two arbitrary distributions, $\pi_0$ and $\pi_1$, while remaining close (in a relative entropy sense) to a reference diffusion process such as a Wiener process~\cite{leonard2013survey, chen2021stochastic}. In machine learning, diffusion generative models can be viewed as a special case of the SBP, where the goal is to learn an optimal transformation from a simple reference distribution (typically Gaussian) to a complex target distribution~\cite{song2020score, ho2020denoising, de2021diffusion, vincent2011connection, hyvarinen2005estimation, sohl2015deep, huang2021variational}.
Under the transition density of Brownian motion, paths that transport mass from arbitrary $\pi_0$ to $\pi_1$ are highly unlikely; the SBP addresses this by selecting, among all such low-probability trajectories, the one that is most likely. This optimal stochastic process has time-marginal density $p(\mathbf{x}, t)$ that admits the factorization
$p(\mathbf{x}, t) = \phi(\mathbf{x}, t)\, \widehat{\phi}(\mathbf{x}, t)$, 
where the potentials $\phi$ and $\widehat{\phi}$ satisfy heat equations
\begin{align}
    \partial_t \phi &= -\beta \Delta \phi~, &
    \partial_t \widehat{\phi} &= \beta \Delta \widehat{\phi}~,
\end{align}
with $\beta$ denoting the diffusion (Wiener) coefficient.
This structure closely mirrors quantum mechanics, where the probability density is given by $p(\bx, t) = \psi(\bx, t) \psib(\bx, t)$, with $\psi$ satisfying the (imaginary-time) Schrödinger equation
$i \frac{\partial \psi}{\partial t} = -\beta \Delta \psi$, 
highlighting a deep mathematical connection between stochastic control and quantum dynamics~\cite{griffiths2019introduction, schrodinger1926undulatory}.
This analogy was rigorously formalized by Guerra and Morato~\cite{guerra1983quantization}, and extended in subsequent works~\cite{carlen1984conservative, nagasawa2012schrodinger, nelson2020quantum}, which demonstrated that the SBP admits an alternative formulation as a stochastic control problem, governed not by classical optimal transport dynamics~\cite{monge1781memoire, kantorovich1942translocation}, but by the kinematics of the Schrödinger equation.

In this work, we aim to bridge the conceptual and methodological gap between the ``classical'' SBP and the Guerra-Morato (GM) formulation, referred to as the Quantum Schrödinger Bridge Problem (QSBP), following the terminology of \cite{pavon2002quantum}. Our first contribution is to reformulate the GM Lagrangian in terms of a forward-backward stochastic differential equation (SDE) system and derive the necessary conditions that the associated quantities must satisfy. This reformulation enables the adaptation of various techniques originally developed for solving the classical SBP \cite{de2021diffusion, shi2023diffusion, chen2021likelihood, vargas2021solving, tong2023simulation, neklyudov2022action} to the quantum setting.
These methods typically involve learning forward and backward stochastic processes between the marginal distributions $\pi_0(\mathbf{x})$ and $\pi_1(\mathbf{x})$ using two neural networks, with a Iterative Proportional Fitting Procedure (IPFP) \cite{ruschendorf1995convergence, ireland1968contingency, essid2019traversing} optimization procedure. However, such approaches are often computationally intensive due to the necessity of sampling the whole trajectory, which can hinder their usability in practice \cite{shi2023diffusion}.
To address these limitations, a complementary line of research has focused on the analytical tractability of Gaussian distributions to develop more efficient SBP solvers \cite{chewi2020gradient, altschuler2021averaging, korotin2023light}. Exact solutions to the SBP are known in only a few cases, with the Gaussian setting being one of the most prominent~\cite{mallasto2022entropy, bunne2023schrodinger}. 

In this work, we extend the set of exact solutions by deriving the closed-form expression of the QSBP for Gaussian marginals. Our approach differs significantly from that of \cite{bunne2023schrodinger}, which employ tools from Riemannian geometry. In contrast, we adopt a Lagrangian framework and solve both the Fokker–Planck equation and the quantum Hamilton–Jacobi equation explicitly, offering a distinct theoretical perspective for addressing bridging problems with tractable distributions.

In short, this work places the \emph{quantum Schrödinger Bridge on equal footing with the classical SBP in both theoretical understanding and practical usability}. Specifically, we make the following contributions:
\begin{itemize}
\item In Section~\ref{s:classical2quantum}, we introduce the optimality conditions for the QSBP and show that they lead to the Quantum Hamilton-Jacobi Equation, whose solution is governed by the Schrödinger equation. While these results are not novel per se, we reformulate them in a manner that is suitable for generative score-matching models.
\item In Section~\ref{s:theorem}, we derive a closed-form solution to the QSBP for Gaussian measures.
\item In the subsequent sections, we extend the Gaussian Mixture Model (GMM) algorithm to our exact solution and apply it to a variety of bridging tasks, including image generation, single-cell data modeling and latent molecular properties translation. We also demonstrate an application to Mean Field Games via a variational formulation of the quantum Lagrangian.
\end{itemize}

\section{From Classical to Quantum Schrödinger Bridge}
\label{s:classical2quantum}

The goal of this section is to set up notation and motivate and formally introduce our problem setting. 

Let us denote by \textit{path measure} any positive measure $\mathbb{Q}\in M_+(\Sigma)$, where $\Sigma = C\left([0,1], X\right)$ is the space of all continuous paths with time-coordinate $t\in [0,1]$. \footnote{We are restricting our attention to the continuous case here. If we want to extend the definition to the discrete case as well, we need to replace $\Sigma = D\left([0,1], X\right)$ with the space of c\`adl\`ag (right-continuous left-limited) paths.} Let $P(\Sigma)$ be the space of probability measures on $\Sigma$, that is, $\bOmegaP\in M_+(\Sigma)$ and $\int_\Sigma d\bOmegaP = 1$. 
Given two (potentially unknown, but from which we can sample) distributions $\pi_0$ and $\pi_1$ on $X$ and a reference path measure $\mathbb{R}\in M_+(\Sigma)$, the Schrödinger Bridge Problem (SBP) aims at selecting a path measure $\mathbb{Q}\in M_+(\Sigma)$ such that it generates a mapping between the distributions, 
$\mathbb{Q}_{0,1}=\pi_{0,1}$, and it satisfies some notation of optimality, for instance that the relative entropy $H(\mathbb{Q}|\mathbb{P}) = \int_\Sigma \log\left(\frac{d\bOmegaQ}{d\bOmegaP}\right)d\bOmegaP$ is minimized. 
It is often the case that $\bOmegaP$ is a reversible Markov process, for instance a linear stochastic processes \cite{bunne2023schrodinger} or the reversible Brownian motion on $X=\R^n$ \cite{leonard2013survey}.
A useful construction of such a path measure $\bOmegaQ$ that descends from diffusion models is represented in terms of 
a solution to a stochastic differential equation (SDE)
\begin{align}
\label{eq:forwardSDE}
\mathrm{d}\bx(t) &= \bfb_+(\bx(t), t) \, \mathrm{d}t + \sqrt{2 \beta(t)} \, \mathrm{d}\bW(t)~,
&\bx(0)&\sim \pi_0~, \ \ \bx(1)\sim \pi_1
\end{align}
where $\bfb_+: \Sigma \rightarrow TX$ is the vector-valued drift coefficient. 
$\beta(t)$ is the diffusion coefficient, which we assume to be $\bx$-independent for simplicity, and $\mathrm{d} \bW$ is the reversible Brownian motion on $X$. 
As in generative score-matching, there exists a reverse-time SDE that generates the same marginal probability $p_t(\bx)$ as \eqref{eq:forwardSDE} for all $t\in [0,1]$ and is given by
\cite{nelson1966derivation, anderson1982reverse} 
\begin{align}
\mathrm{d}\bx(t) &= \left[\bfb_+(\bx, t) - 2\beta(t) \nabla \log p_t(\bx)\right]\dd t + \sqrt{2 \beta(t)}\mathrm{d}\bW(t) = \bfb_-(\bx, t)\mathrm{d}t + \sqrt{2 \beta(t)}\mathrm{d}\bW(t)~, \nonumber
\end{align}
with the same boundary conditions as in \eqref{eq:forwardSDE}. Parametrizing $\bOmegaQ$ with \eqref{eq:forwardSDE} and taking $\bOmegaP$ to be the reversible Brownian motion (i.e., \eqref{eq:forwardSDE} with $\bfb_+\equiv0$), the optimality condition is equivalent to a minimization of the kinetic energy of the process.
\begin{align}
\label{eq:kinetic}
\min_{p_t\in P(\Sigma), \bfb_+} \ & \frac12 \int \int |\bfb_+(\bx(t),t)|^2 \, p(\bx, t) \, \mathrm{d}t \, \mathrm{d}\bx ~,
\qquad\qquad \bx(0)\sim \pi_0~, \ \ \bx(1)\sim \pi_1~.
\end{align}
As pointed out in several works \cite{chen2015optimal, chen2023density, liu2022deep, liu2023generalized},
it is useful to extend the range of optimality conditions and to add a potential term $V(\bx(t))$ to the stochastic action, yielding the Generalized Schrödinger Bridge problem (GSBP), under which the process \eqref{eq:forwardSDE} satisfies
\begin{align}
\label{eq:GSBP}
\min_{p_t\in P(\Sigma), \bfb_+} \ & \int \int \left[ \frac12  |\bfb_+(\bx(t),t)|^2 + V(\bx(t)) \right] \, p(\bx, t) \, \mathrm{d}t \, \mathrm{d}\bx ~,
\quad \bx(0)\sim \pi_0~, \ \ \bx(1)\sim \pi_1~.
\end{align}
The inclusion of a potential term fundamentally modifies the nature of the optimal solution. The transport plan must now balance two competing factors: kinetic energy, which quantifies the "speed" at which the distribution $\pi_0$ is transported to $\pi_1$, and potential energy, which introduces additional costs in the path measure space, penalizing certain trajectories more heavily based on their interaction with the potential. Although the quantity $V(\bx)$ in \eqref{eq:GSBP} potentially depends on the coordinates $\bx$, it takes the role of an \textit{external potential}, as it simply adds a (point-dependent) cost in the space $\Sigma$, realizing a geometric or interaction-based bias.

Regardless of the specific Lagrangian chosen for the optimality of the process, 
the time evolution of the marginal density $p_t(\bx(t))$ is governed by the continuity equation (also
Fokker-Planck Equation (FPE) \cite{risken1996fokker})
\begin{align}
\label{eq:FPE}
    \p_t p_t(\bx(t)) &= -\nabla \cdot (\bfb_+(\bx, t) \ p_t(t))+ \beta(t) \Delta p_t(\bx(t)) = 
    - \nabla \cdot (\bv\,p)~,
\end{align}
where we introduced the drift and osmotic velocities \cite{nelson1966derivation}
\begin{align}
\label{eq:osmotic}
\bv &= \frac{\bfb_+ + \bfb_-}{2} = \bfb_+(\bx(t), t) -\beta(t)\nabla \log p_t(\bx(t))~,
&\bu &= \frac{\bfb_+ - \bfb_-}{2} = \beta \nabla \log p(\bx(t), t)~.
\end{align}

\subsection{The Guerra-Morato Lagrangian and the Quantum Schrödinger Bridge}

In this work, we generalize the concept of the potential in \eqref{eq:GSBP} and we consider it to be a function on the whole configuration space (parametrized by both $\bx$ and $\dot{\bx}$), $V = V(\bx(t), \dot{\bx}(t))$. Specifically, we set $ V(\bx(t), \dot{\bx}(t)) = \nabla \cdot \bfb_+(\bx(t), t)$.
This leads to a Lagrangian known as the 
\textit{Guerra-Morato Lagrangian} \cite{faris2014diffusion, guerra1983quantization}, 
Following \cite{pavon2002quantum, pavon2004footnote}, we denote the following problem the \textit{Quantum Schrödinger Bridge Problem} (QSBP):
\begin{definition}
\label{def:QSPB}
    The Quantum Schrödinger Bridge Problem is defined as the distribution matching process that minimizes the Lagrangian
    \begin{align}
        \label{eq:QSBP}
        \cL_{\text{QSB}} = & \int \int \left[ \frac12  |\bfb_+(\bx(t),t)|^2 + \nabla\cdot \bfb_+(\bx(t), t) \right] \, p(\bx, t) \, \mathrm{d}t \, \mathrm{d}\bx ~,
    \end{align}
    subject to 
    \begin{align}
        \mathrm{d}\bx(t) &= \bfb_+(\bx(t), t) \, \mathrm{d}t + \sqrt{2 \beta(t)} \, \mathrm{d}\bW(t)~,
&\bx(0)&\sim \pi_0~, \ \ \bx(1)\sim \pi_1
    \end{align}
\end{definition}
In the remainder of this section, we will show that the governing equation of the QSBP is the Schrödinger equation from quantum mechanics (hence the \textit{quantum} attribute) and derive some properties of its kinematics.
Notably, it can be related to the Madelung formulation of dynamics \cite{madelung1926anschauliche, wyatt2005quantum}.

\subsection{The Schrödinger equation and the Bohm potential}
\begin{figure}[t!]
    \centering
    \includegraphics[width=\textwidth]{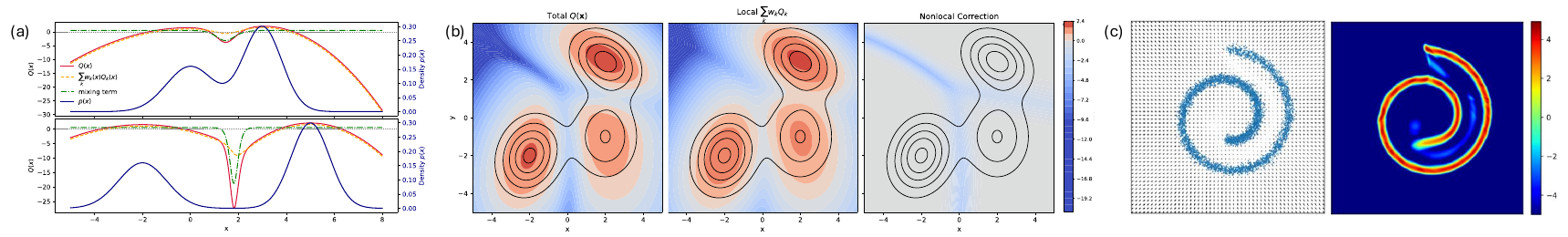}
    \caption{Example of 1d (a) and 2d distributions (b) with the corresponding Bohm potentials. (c) Learned scores of the data distribution $\nabla \log p(x)$ (top) and learned Bohm potential (bottom) \eqref{eq:bohm} for the Swiss roll dataset. The Bohm potential peaks at the data points and drops for points out of distribution ($Q(x)<-5 = -5$ is applied for visualization purposes).}
    \label{fig:bohm}
\end{figure}

In this section, we explore the solution of the QSBP and its connection to fundamental equations of physics and probability. Specifically, we derive the Hamilton-Jacobi equation that governs the dynamics of the system, enriched by the inclusion of the Bohm potential. This quantum potential introduces non-local effects, reflecting how each particle's behavior depends not only on local properties but also on the global configuration of the system. Further, we establish links between the optimal transport framework and quantum mechanics, as we show that the governing equation of motion is the Schrödinger equation.
All proofs can be found in Appendix \ref{app:proof_sec2}.
\begin{proposition}
The solution of the QSBP (Definition \ref{def:QSPB}) is described by the quantum Hamilton-Jacobi equation
\begin{align}
\label{eq:quantumHJ}
    \partial_t S(\bx) + \frac12|\nabla S(\bx)|^2 = -Q(\bx)~,
\end{align}
where $Q(\bx)$ is known as the Bohm potential (or quantum potential) and is given as
\begin{align}
\label{eq:bohm}
    Q(\bx) = -2 \beta(\bx)^2\frac{\Delta \sqrt{p_t(\bx)}}{\sqrt{p_t(\bx)}}= -\beta(t)^2 (\Delta \log p_t(\bx)+ \frac12 |\nabla\log p_t(\bx)|^2)~.
\end{align}
\end{proposition}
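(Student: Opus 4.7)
My strategy is to recast the QSBP action in terms of the drift--osmotic velocity decomposition \eqref{eq:osmotic} and apply a Benamou--Brenier-style constrained variational principle, with the Fokker--Planck equation \eqref{eq:FPE} as the constraint. The first step is to substitute $\bfb_+ = \bv + \bu$ into \eqref{eq:QSBP}. Expanding $|\bfb_+|^2 = |\bv|^2 + 2\bv\cdot\bu + |\bu|^2$ and integrating the Guerra--Morato potential by parts,
\begin{equation*}
\int (\nabla \cdot \bfb_+)\, p\, d\bx \;=\; -\int (\bfb_+ \cdot \nabla\log p)\, p\, d\bx \;=\; -\tfrac{1}{\beta}\int (\bv\cdot\bu + |\bu|^2)\, p\, d\bx,
\end{equation*}
the cross $\bv\cdot\bu$ contributions cancel (up to an absorbable factor of $\beta$) and the action reduces to the Guerra--Morato form $\int\!\int \bigl[\tfrac{1}{2}|\bv|^2 - \tfrac{1}{2}|\bu|^2\bigr]\, p\, dt\,d\bx$. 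The key identity that injects the Bohm potential into the Lagrangian is
\begin{equation*}
\int \tfrac{1}{2}|\bu|^2\, p\, d\bx \;=\; \int Q(p)\,p\, d\bx \;=\; 2\beta^2 \int |\nabla \sqrt{p}|^2\, d\bx,
\end{equation*}
which follows from a double integration by parts on $\int \Delta \log p \cdot p\, d\bx$ and the elementary equality $|\nabla p|^2/p = 4|\nabla\sqrt{p}|^2$. This rewrites the action as $\int\!\int [\tfrac{1}{2}|\bv|^2 - Q(p)]\, p\, dt\, d\bx$, with $Q$ appearing explicitly as a potential-like term.

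Next, I would introduce a Lagrange multiplier $S(\bx, t)$ enforcing the continuity equation $\partial_t p + \nabla \cdot (\bv p) = 0$, and integrate by parts in both $t$ and $\bx$ (boundary contributions vanish since $p(\cdot, 0) = \pi_0$ and $p(\cdot, 1) = \pi_1$ are fixed). The augmented action takes the form
\begin{equation*}
\int\!\!\int \left\{\tfrac{1}{2}|\bv|^2\, p - Q(p)\,p - p\,\partial_t S - p\,\bv\cdot\nabla S\right\} dt\, d\bx.
\end{equation*}
Varying with respect to $\bv$ gives $\bv = \nabla S$, so the optimal drift velocity is a gradient field whose potential coincides with the multiplier. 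Varying with respect to $p$ and substituting $\bv = \nabla S$ yields the Euler--Lagrange condition $\tfrac{1}{2}|\nabla S|^2 - Q(p) - \partial_t S - |\nabla S|^2 = 0$, which rearranges directly into the quantum Hamilton--Jacobi equation \eqref{eq:quantumHJ}.

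The main obstacle is the functional differentiation of the nonlinear term $\int Q(p)\, p\, d\bx$ in the $p$-variation, since $Q$ itself depends on $p$ through $\Delta \sqrt{p}/\sqrt{p}$. I would bypass direct computation via the wave-function parametrization $R = \sqrt{p}$: the representation $\int Q\,p\, d\bx = 2\beta^2 \int |\nabla R|^2\, d\bx$ reduces the variation to a Dirichlet energy, and combining $\delta R = \delta p/(2R)$ with a single integration by parts reproduces the clean identity $\delta/\delta p\,\int Q\,p\, d\bx = Q(p)$. The gradient structure $\bv = \nabla S$ falls out of the variational calculation, but can also be justified a priori by a Helmholtz decomposition: any divergence-free component of $\bv$ leaves the continuity equation invariant while strictly inflating the kinetic energy, so the optimum must be curl-free.
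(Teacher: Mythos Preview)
Your proposal is correct and follows the same overall architecture as the paper's proof: introduce a Lagrange multiplier $S$ for the continuity equation, perform the velocity variation to obtain the gradient structure $\bv=\nabla S$, and then read off the quantum Hamilton--Jacobi equation. The execution differs in two respects worth noting.

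First, you change variables $\bfb_+\mapsto(\bv,\bu)$ \emph{before} introducing the multiplier and collapse the action to the symmetric Guerra--Morato form $\tfrac12(|\bv|^2-|\bu|^2)\,p$, only afterwards recognizing $\int\tfrac12|\bu|^2 p = \int Q\,p$. The paper keeps $\bfb_+$ as the primary variable, inserts the multiplier immediately, and varies in $\bfb_+$ to obtain $\bfb_+=\beta\nabla\log p+\nabla S$; the Guerra--Morato form never appears explicitly. Your route makes the quantum-versus-classical structure (kinetic minus Fisher information) more visible, at the price of one extra substitution step.

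Second, and more substantively, you handle the $p$-variation of the nonlinear term $\int Q(p)\,p\,\dd\bx$ explicitly via the Dirichlet-energy identity $\int Q\,p = 2\beta^2\int|\nabla\sqrt{p}|^2$ and $\delta/\delta p\,\int Q\,p = Q$. The paper, after substituting the optimal $\bfb_+$, arrives at the reduced Lagrangian $\int[-Q-\partial_tS-\tfrac12|\nabla S|^2]\,p\,\dd\bx\,\dd t$ and simply states ``which vanishes when'' the QHJE holds, without carrying out the $p$-variation. Your treatment is therefore the more complete of the two: the identity $\delta_p\!\int Q\,p = Q$ is precisely what makes the paper's informal step legitimate, and you supply it.

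The Helmholtz remark at the end is redundant once $\bv=\nabla S$ falls out of the variation, but harmless. Your comment about the ``absorbable factor of $\beta$'' correctly flags that Definition~\ref{def:QSPB} omits the $\beta$ in front of $\nabla\cdot\bfb_+$ that the appendix proof (and Table~\ref{table:combined_lagrangians}) restores; with that factor in place the cancellation is exact.
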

The quantum potential $Q$ represents a notion of non-locality: each particle evolving in the process do not merely perceives the local effects due to the local potentials, but is also affected by the information about the whole motion through $Q$. In Figure \ref{fig:bohm}c we depicted the Bohm potential for the learned density for the Swiss roll dataset. 
We note that the quantum potential $Q$ coincides exactly with the score matching objective \cite{hyvarinen2005estimation} up to the scaling coefficient $\beta^2$. 

Next, we show that the dynamics defined by \eqref{eq:QSBP} follows the solution of the Schrödinger equation. 
\begin{proposition}
The stationary points of the QSBP satisfy the time-dependent Schr{\"o}dinger equation
\begin{align}
\label{eq:schroedinger}
    i \p_t\psi(\bx, t) &= -\beta(t) \Delta \psi(\bx, t)~,   & \psi(\bx)  \psi^*(\bx) &= p_t(\bx)~,
\end{align}
where the wavefunction $\psi(\bx, t) = \sqrt{p_t(\bx)}e^{\frac{i}{2\beta(t)} S(\bx, t)}$ and phase $S$ are related to the drift velocity $\bv$ via a gradient $\bv = \nabla S$.
\end{proposition}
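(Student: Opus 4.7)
The plan is to verify that the Madelung substitution $\psi(\bx,t) = \sqrt{p_t(\bx)}\exp\!\left(\frac{i}{2\beta(t)}S(\bx,t)\right)$ converts the two real equations satisfied at the QSBP stationary points, namely the continuity equation \eqref{eq:FPE} with drift $\bv = \nabla S$ and the quantum Hamilton--Jacobi equation \eqref{eq:quantumHJ} from the previous proposition, into the single complex equation $i\partial_t\psi = -\beta(t)\Delta\psi$. This is the classical Madelung equivalence; the content is essentially computational, combined with an invocation of the two results already established earlier in this section.

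The first step is to compute $\Delta\psi$ and $\partial_t\psi$ by the product and chain rules. Writing $R = \sqrt{p_t}$ for brevity, a direct expansion gives
\begin{align*}
\Delta\psi &= \left[\Delta R + \frac{i}{\beta}\nabla R\cdot\nabla S + \frac{iR}{2\beta}\Delta S - \frac{R}{4\beta^2}|\nabla S|^2\right]e^{iS/(2\beta)},\\
\partial_t\psi &= \left[\frac{\partial_t R}{R} + \frac{i}{2\beta}\partial_t S\right]\psi.
\end{align*}
Substituting into $i\partial_t\psi + \beta\Delta\psi$, dividing by $\psi$, and separating real and imaginary parts is then the main algebraic step.

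Next I would match each part to one of the known optimality conditions. The imaginary part collapses, after multiplying by $2R$, to $\partial_t p_t + \nabla\cdot(p_t\nabla S) = 0$, which is exactly \eqref{eq:FPE} once the optimal drift is identified as $\bv = \nabla S$ (this identification itself comes from the variational derivation leading to \eqref{eq:quantumHJ}, where $S$ plays the role of the action/velocity potential). The real part reduces, after multiplying by $-2\beta/R$, to $\partial_t S + \tfrac{1}{2}|\nabla S|^2 = 2\beta^2\,\Delta\sqrt{p_t}/\sqrt{p_t}$, which is precisely \eqref{eq:quantumHJ} rewritten via definition \eqref{eq:bohm} of $Q$. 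Since both equations hold at any QSBP critical point, so does the Schrödinger equation, and $|\psi|^2 = p_t$ is automatic from the ansatz.

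The main obstacle I anticipate is not conceptual but notational: keeping track of a genuinely time-dependent $\beta(t)$. Differentiating the factor $1/(2\beta(t))$ sitting inside the phase introduces an extra real contribution proportional to $S\,\dot\beta/\beta^2$ that has no counterpart on the right-hand side. I would handle this either by restricting to constant $\beta$ (which is the setting typically considered in the Guerra--Morato literature), or by absorbing the offending term via a time-dependent gauge transformation $\psi \mapsto e^{i\theta(t)}\psi$ with $\theta$ chosen to cancel the mismatch, after which the separation of real and imaginary parts goes through exactly as above.
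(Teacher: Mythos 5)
Your proposal follows essentially the same route as the paper's proof: substitute the Madelung ansatz $\psi=\sqrt{p_t}\,e^{iS/(2\beta)}$ into the Schr\"odinger equation, divide by $\psi$, and identify the imaginary part with the continuity equation \eqref{eq:FPE} (using $\bv=\nabla S$) and the real part with the quantum Hamilton--Jacobi equation \eqref{eq:quantumHJ}. Your remark about the extra $S\,\dot\beta/\beta^2$ term for genuinely time-dependent $\beta(t)$ is a valid point of care that the paper's computation silently omits (it effectively treats $\beta$ as constant when differentiating the phase), so your proposed fix via restriction to constant $\beta$ or a time-dependent gauge is a welcome refinement rather than a deviation.
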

Here, the wavefunction $\psi = \sqrt{p(x)} e^{\frac{i}{2\beta} S}$ has a phase $S$ related to the drift velocity $\bv$ via the gradient $\bv = \nabla S$. While the Schr\"odinger wavefunction $\psi$ is complex \cite{le2021parameterized}, the heat equation functions 
associated with the SBP are real.
The Schr{\"o}dinger equation can thus be regarded as a heat equation in imaginary time. 

\begin{figure}[t!]
    \centering
\includegraphics[width=\textwidth]{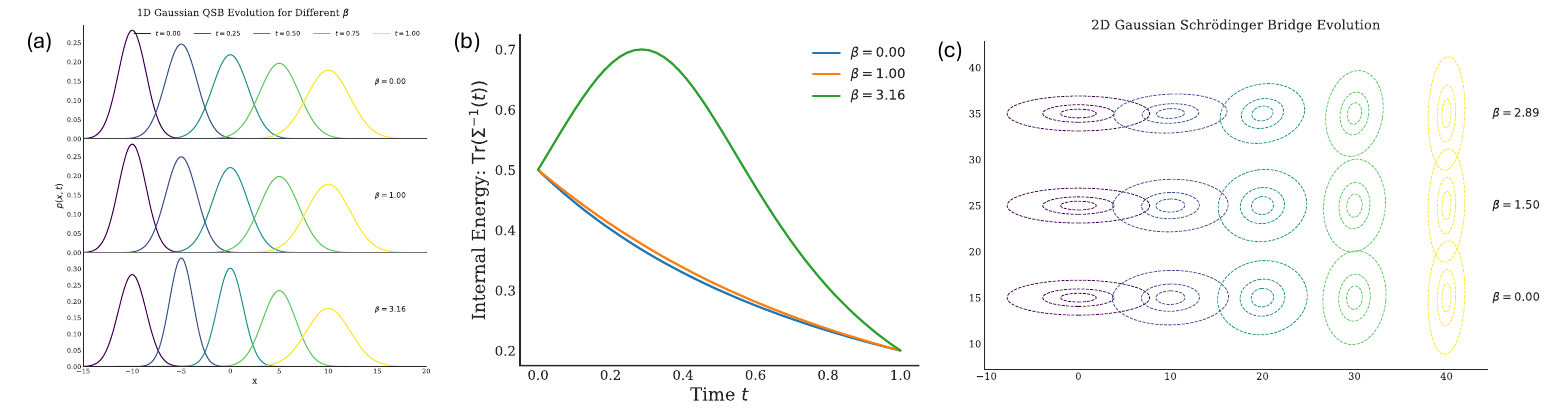}
    \caption{Visualization of Gaussian propagation: (a) 1d examples for different values of $\beta$ with relative total internal potential energy (b); (c) 2d examples for different values of $\beta$.} 
    \label{fig:gaussian_expansion}
\end{figure}

\subsection{The Bohm Potential for Gaussian Distributions and Internal Potential Energy}
\label{ss:bohm_gaussian}

Consider the multivariate Gaussian distribution $p(\bx, t)=\cN(\bx; \bmu, \bSigma)$ defined by
\begin{equation}
\label{eq:multi_gauss}
    p(\bx, t) = \frac{1}{(2\pi)^{\frac{n}2} \sqrt{\det \bSigma(t)}} \exp\!\left(-\frac12 (\bx - \bmu(t))^\top \bSigma^{-1}(t) (\bx - \bmu(t))\right)\,.
\end{equation}
A simple calculation (which we report in detail in Appendix \ref{app:bohm_gaussian}) yields the explicit expression for the Bohm potential \eqref{eq:bohm}
\begin{align}
    Q(\bx) = \beta(t)^2 \left[ \mathrm{Tr}\Bigl(\bSigma^{-1}(t)\Bigr) - \frac12 (\bx - \bmu(t))^\top \Bigl(\bSigma^{-1}(t)\Bigr)^2 (\bx - \bmu(t)) \right]~.
\end{align}
The total internal potential energy associated with a Gaussian distribution is given by the expectation value of $Q(\bx)$ under the distribution \eqref{eq:multi_gauss}
\begin{align}
    \int Q(\bx) p(\bx, t)  \mathrm{d}\bx &= \beta(t)^2 \mathrm{Tr}\Bigl(\bSigma^{-1}(t)\Bigr) - \frac{\beta(t)^2}{2} \int (\bx - \bmu(t))^\top \Bigl(\bSigma^{-1}(t)\Bigr)^2 (\bx - \bmu(t)) p(\bx, t) \mathrm{d}\bx \nonumber\\
    & =\frac{\beta(t)^2}{2}\,\mathrm{Tr}\Bigl(\bSigma^{-1}(t)\Bigr)\,.
\end{align}

\subsection{Bohm Potential of a Gaussian Mixture}

Since it will be relevant for the discussion below concerning our algorithm for a Gaussian Mixture Model, we also derive here the Bohm potential for Gaussian mixture distribution of the form
\begin{align}
p(\bx) &= \sum_{k=1}^K \alpha_k \,\cN(\bx; \bmu_k, \bSigma_k)~,
  &\alpha_k &\ge 0~,  &\sum_{k=1}^K \alpha_k &= 1~.
\end{align}
A slightly lengthy calculation (which we report in Appendix \ref{app:bohm_gaussian}) yields
\begin{align}
\label{eq:bohm_mixture}
Q(\bx)&=\sum_{k=1}^K w_k(\bx) Q_k(\bx) \nonumber\\
&\quad +\frac{\beta^2}{2} \sum_{k=1}^K w_k(\bx)(\bx - \bmu_k)^\top\bSigma_k^{-1}  \Bigl[
  \sum_{j=1}^K w_j(\bx)
\bSigma_j^{-1}(\bx - \bmu_j)-\bSigma_k^{-1}(\bx - \bmu_k)
\Bigr]~,
\end{align}
where we defined the posterior mixture weights
\begin{align}
w_k(\bx) &= \frac{\alpha_k\,\cN(\bx; \bmu_k, \bSigma_k)}{p(\bx)}~,
 &\sum_{k=1}^K w_k(\bx)& = 1~,
\end{align}
and $Q_k$ is the Bohm potential for the $k{}^{\text{th}}$-summand. 
This shows that the Bohm potential of a Gaussian mixture is given by a responsibility‐weighted sum of the single‐Gaussian Bohm potentials and an extra term that reflects the nontrivial mixture‐log‐density structure.

\newpage
\section{Closed-form Solution of the Quantum Schr\"odinger Bridge 
Problem for Multivariate Gaussian Measures}
\label{s:theorem}

\begin{table}[t!]
\caption{Comparison of Lagrangians and Solutions for different SBPs. We set $\widetilde{\Sigma}_{01}= \bSigma_0^{\frac12}\bSigma_1\bSigma_0^{\frac12}$.}
\label{table:combined_lagrangians}
\vskip 0.1in
\begin{small}
\begin{sc}
\begin{adjustbox}{width=1\textwidth}
\begin{tabular}{c|ccc}
\toprule
\textbf{Aspect} & \textbf{Benamou-Brenier OT} & \textbf{Classical SBP} & \textbf{Quantum SBP} \\ 
\midrule
\multirow{2}{*}{\makecell{\textbf{Feasibility} \\ \textbf{Parametrization}}}  & $\dd \bx = \bfb_+(\bx, t)\mathrm{d}t$ & $\dd \bx = \bfb_+(\bx, t) \mathrm{d}t + \sqrt{2 \beta(t)} \mathrm{d}\bW$ & $\dd \bx = \bfb_+(\bx, t) \mathrm{d}t + \sqrt{2 \beta(t)}  \mathrm{d}\bW$ \\
& $\bx(0)\sim \pi_0~, \ \bx(1)\sim \pi_1$ & $\bx(0)\sim \pi_0~, \ \bx(1)\sim \pi_1$ & $\bx(0)\sim \pi_0~, \ \bx(1)\sim \pi_1$ \\
\midrule
\makecell{\textbf{Optimality} \\ \textbf{Objective}}  & $\min \frac12 \mathbb{E} \int_0^1 \bfb_+^2 \mathrm{d} t$ & $\min \frac12 \mathbb{E} \int_0^1 \bfb_+^2 \, \mathrm{d} t$ & $\min \mathbb{E}  \int_0^1 \left( \frac{\bfb_+^2}{2} + \beta \, \nabla \cdot \bfb_+ \right)  \mathrm{d} t$ \\
\midrule
\multirow{2}{*}{\makecell{\textbf{Continuity} \\ \textbf{Equation}}} & $\partial_t p + \nabla \cdot (p\, \bv) = 0$ & $\partial_t p + \nabla \cdot (p\, \bv) = 0$ & $\partial_t p + \nabla \cdot (p \bv) = 0$ \\
& $\bv = \bfb_+$ &  $\bv  = \bfb_+ -\beta \nabla \log p_t(\bx)$ 
&  $\bv  = \bfb_+ -\beta \nabla \log p_t(\bx)$\\
\midrule
\multirow{2}{*}{\makecell{\textbf{Hamilton-Jacobi} \\ \textbf{Equation}}} & $\partial_t S + \frac12 |\nabla S|^2 = 0$ & $\partial_t S + \frac12 |\nabla S|^2 = -\beta \Delta S$ & $\partial_t S + \frac12 |\nabla S|^2 = -Q(\bx)$ \\ 
& $\bv = \nabla S$ & $\bfb_+ = \nabla S$ & $\bv = \nabla S$ \\
\midrule
\multirow{2}{*}{\makecell{\textbf{Evolution} \\ \textbf{Equations}}} & $\bx(t) = \bx(0) + \bv t$ & $\frac{\partial \phi}{\partial t} = -\beta \, \Delta \phi$ & $i \frac{\partial \psi}{\partial t} = -\beta \Delta \psi$ \\ 
& & $\frac{\partial \hat{\phi}}{\partial t} = \beta \, \Delta \hat{\phi}$ & $\psi = \sqrt{p} \exp{\left(\frac{i}{2 \beta} S\right)}$ \\
\midrule
\midrule
\makecell{\textbf{Mean} \\ \textbf{Evolution}} & 
$\bmu_0 + (\bmu_1-\bmu_0)t$ & $\bmu_0 + (\bmu_1-\bmu_0)t$ &$\bmu_0 + (\bmu_1-\bmu_0)t$ \\
\midrule
\makecell{\textbf{Variance} \\ \textbf{Evolution}} & {\tiny $\bSigma_0^{-\frac12}\Bigl[(1-t)\bSigma_0 + t  \widetilde{\bSigma}_{01} ^{\frac12}  \Bigr]^2\bSigma_0^{-\frac12}$ }
& {\tiny $\bSigma_0^{-\frac12}\Bigl[(1-t)\bSigma_0 + t  \Bigl( \widetilde{\bSigma}_{01} + \beta^2 \bI \Bigr)^{\frac12}  \Bigr]^2\bSigma_0^{-\frac12}- t \beta^2 \bSigma_0^{-1}$ }
& {\tiny $\bSigma_0^{-\frac12}\Bigl[(1-t)\bSigma_0 + t  \Bigl( \widetilde{\bSigma}_{01} - \beta^2 \bI \Bigr)^{\frac12}  \Bigr]^2\bSigma_0^{-\frac12}+ t \beta^2 \bSigma_0^{-1}$ }\\
\midrule
\bottomrule
\end{tabular}
\end{adjustbox}
\end{sc}
\end{small}
\end{table}

In this section, we discuss analytical solutions of the QSBP for the case of two Gaussian distributions, $\cN(\bmu_0, \bSigma_0)$ and $\cN(\bmu_1, \bSigma_1)$, where $\bx_{0,1}\in\R^n$ are the means, and $\bSigma_{0,1}\in \R^{n\times n}$ are the covariance matrices of the respective distributions.
Our main result is given by the following
\begin{theorem}
Given a probability distribution of the form
\begin{align}
    \label{eq:multi_gaussian_t}
    p(\bx, t) = \frac{1}{(2\pi)^{n/2} \sqrt{\det \bSigma(t)}} \exp\left(-\frac12 (\bx - \bmu(t))^\top \bSigma^{-1}(t) (\bx(t) - \bmu(t))\right)~,
\end{align}
it solves the QSBP \eqref{eq:QSBP} with boundary conditions 
$\pi_{0,1}(\bx) = \cN(\bx; \bmu_{0,1}, \bSigma_{0,1})$, 
where 
\begin{align}
\label{eq:gauss_solution}
\bmu(t)&=\bmu_0+(\bmu_1 - \bmu_0)t~,\nonumber\\
\bSigma(t) &= \bSigma_0^{-\frac12}\Bigl[(1-t)\bSigma_0 + t  \Bigl( \bSigma_0^{\frac12}\,\bSigma_1\,\bSigma_0^{\frac12} - \beta^2 \bI \Bigr)^{\frac12}  \Bigr]^2\bSigma_0^{-\frac12} \;+\;  \, t \beta^2 \bSigma_0^{-1}~.
\end{align}
\end{theorem}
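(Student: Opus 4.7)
The plan is to exploit the Gaussian ansatz and reduce the QSBP to a coupled system of matrix ODEs driven by the Fokker--Planck and quantum Hamilton--Jacobi equations from Section~\ref{s:classical2quantum}. Because the Bohm potential $Q$ computed in Section~\ref{ss:bohm_gaussian} is quadratic in $\bx$ for any Gaussian, the quantum HJ equation \eqref{eq:quantumHJ} preserves quadratic phases, which motivates the ansatz
\[
p(\bx,t)=\cN(\bx;\bmu(t),\bSigma(t)),\qquad
S(\bx,t)=\tfrac12(\bx-\bmu)^\top\bA(t)(\bx-\bmu)+\mathbf{b}(t)^\top(\bx-\bmu)+c(t),
\]
so that the drift velocity $\bv=\nabla S=\bA(\bx-\bmu)+\mathbf{b}$ is affine and Gaussianity is preserved by the continuity equation.

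Substituting $p$ and $\bv$ into the FPE $\p_t p+\nabla\cdot(p\bv)=0$ and reading off the first and second moments gives $\dot\bmu=\mathbf{b}$ together with the Lyapunov equation $\dot\bSigma=\bA\bSigma+\bSigma\bA$. Substituting $S$ into the quantum HJ and matching powers of $(\bx-\bmu)$ produces, in quadratic order, the matrix Riccati $\dot\bA+\bA^2=\beta^2\bSigma^{-2}$, and in linear order $\dot{\mathbf{b}}=0$; the zeroth-order equation for $c(t)$ is inessential for the marginal. Combined with $\dot\bmu=\mathbf{b}$, the linear-order condition already pins down the mean as the interpolant $\bmu(t)=\bmu_0+t(\bmu_1-\bmu_0)$, matching the first line of \eqref{eq:gauss_solution}.

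To handle the covariance system the plan is to pass to the congruence-reduced variable $\bY(t):=\bSigma_0^{-1/2}\bSigma(t)\bSigma_0^{-1/2}$, for which $\bY(0)=\bI$ and $\bY(1)$ has the same spectrum as $\widetilde\bSigma_{01}=\bSigma_0^{1/2}\bSigma_1\bSigma_0^{1/2}$. In the fixed eigenbasis of $\widetilde\bSigma_{01}$ the coupled Riccati--Lyapunov system decouples into independent scalar problems for each eigenvalue $\sigma^2$ of $\bY$, each obeying $\ddot\sigma=\beta^2/\sigma^3$. The substitution $u=\sigma^2$ linearizes this to $\ddot u=2E$ with conserved $E$ determined by the first integral $\dot u^2=4Eu-4\beta^2$, so $u$ is quadratic in $t$ with coefficients fixed by the two endpoint values and the first-integral constraint. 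The resulting quadratic for $E$ supplies the factor $(\widetilde\bSigma_{01}-\beta^2\bI)^{1/2}$, and pulling back through $\bSigma=\bSigma_0^{1/2}\bY\bSigma_0^{1/2}$ reassembles the stated closed form \eqref{eq:gauss_solution}; a direct differentiation then verifies the claimed expression as a solution of the ODE system with the correct boundary values.

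The main obstacle is the non-commutative, matrix-valued character of the Riccati--Lyapunov system when $\bSigma_0$ and $\bSigma_1$ do not commute. The congruence $\bY=\bSigma_0^{-1/2}\bSigma\bSigma_0^{-1/2}$ sidesteps this by reducing $\bSigma_0$ to $\bI$, so that $\bY(1)$, being symmetric and positive, admits a fixed orthonormal eigenbasis in which the evolution genuinely decouples and the scalar reduction above is legitimate. Two smaller technicalities remain: selecting the positive-definite branch of $(\widetilde\bSigma_{01}-\beta^2\bI)^{1/2}$, which corresponds to the minimum-action extremal among the two roots of the quadratic for $E$ and presupposes the ``sub-quantum'' spectral condition $\widetilde\bSigma_{01}\succeq\beta^2\bI$; and checking the $\beta\to 0$ limit, in which the formula must collapse to the classical Bures--McCann Gaussian displacement interpolation, a convenient cross-check on signs and square-root branches.
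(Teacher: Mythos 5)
Your first half coincides with the paper's proof: the Gaussian ansatz with a quadratic phase, the reduction of the continuity equation to the Lyapunov equation $\dot{\bSigma}=\bA\bSigma+\bSigma\bA$, the matrix Riccati $\dot{\bA}+\bA^{2}=\beta^{2}\bSigma^{-2}$ from the quadratic order of the quantum Hamilton--Jacobi equation, and the conclusion $\ddot{\bmu}=0$ are exactly the paper's steps (the paper writes $\bC=2\bA$ and additionally tracks a skew-symmetric freedom $\bPsi$ in the velocity, which the gradient condition $\bv=\nabla S$ then constrains). The genuine gap is in your solution of the covariance system. The congruence $\bY=\bSigma_0^{-1/2}\bSigma\bSigma_0^{-1/2}$ does \emph{not} decouple the system in a fixed eigenbasis: the source term $\beta^{2}\bSigma^{-2}=\beta^{2}\bSigma_0^{-1/2}\bY^{-1}\bSigma_0^{-1}\bY^{-1}\bSigma_0^{-1/2}$ retains an interior factor of $\bSigma_0^{-1}$ and is not a function of $\bY$ alone, so the transformed equations do not close inside the commutative algebra generated by any single fixed symmetric matrix. (Moreover, $\bY(1)=\bSigma_0^{-1/2}\bSigma_1\bSigma_0^{-1/2}$ has the spectrum of $\bSigma_0^{-1}\bSigma_1$, not of $\widetilde{\bSigma}_{01}=\bSigma_0^{1/2}\bSigma_1\bSigma_0^{1/2}$.) Your scalar reduction is therefore valid only when $\bSigma_0$ and $\bSigma_1$ commute, and the ``reassembly'' step is precisely where the operator-ordering ambiguity bites: the general solution expands to $(1-t)^{2}\bSigma_0+t^{2}\bSigma_1+t(1-t)\bigl(\bSigma_0^{1/2}\bG\bSigma_0^{-1/2}+\bSigma_0^{-1/2}\bG\bSigma_0^{1/2}\bigr)$ with $\bG=(\widetilde{\bSigma}_{01}-\beta^{2}\bI)^{1/2}$, which is not simultaneously diagonalizable with any fixed matrix. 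The paper instead complexifies, setting $\bC_Q=\bC+2i\beta\bSigma^{-1}$, which converts the quantum Riccati into $\dot{\bC}_Q=-\frac12\bC_Q^{2}$ and is solved by matrix inversion with no commutativity assumption; non-commutativity reappears only in the terminal quadratic matrix equation for $\bOmega$, where joint diagonalization with $\widetilde{\bSigma}_{01}$ is invoked. Your proposed fallback of verifying the final formula by direct differentiation would in principle close the argument, but your derivation route does not actually produce the correct non-commuting candidate to verify.

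A second, instructive point: carried out honestly, your scalar analysis does not reproduce the stated formula. The first integral $\dot u^{2}=4Eu-4\beta^{2}$ with $u(0)=u_0$, $u(1)=u_1$ yields $u(t)=(1-t)^{2}u_0+t^{2}u_1+2t(1-t)\sqrt{u_0u_1-\beta^{2}}$, which corresponds to a correction term $t^{2}\beta^{2}\bSigma_0^{-1}$ rather than the $t\,\beta^{2}\bSigma_0^{-1}$ in \eqref{eq:gauss_solution}; one checks directly that the stated one-dimensional profile violates the conserved quantity $u\ddot u-\frac12\dot u^{2}=2\beta^{2}$ for $\beta\neq0$. The paper's own appendix obtains $+\,t^{2}\beta^{2}\bSigma_0^{-1}$ in its penultimate display before writing $+\,t\,\beta^{2}\bSigma_0^{-1}$ in the last line, so your computation in fact exposes an inconsistency in that final simplification rather than an error in your scalar work. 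Note that both versions agree at $t=0,1$ and in the $\beta\to0$ limit, so the endpoint and Benamou--Brenier cross-checks you propose would not detect the discrepancy.
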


We refer to appendix \ref{app:proof_thorem} for the full proof.
The existance of a solution is subject to the condition that the matrix 
$\bSigma_0^{\frac12}\,\bSigma_1\,\bSigma_0^{\frac12} - \beta^2 \bI$ is semi-positive-definite for all $t$, which implies that 
$\beta \leq \sqrt{\lambda_{\text{min}}}$ where $\lambda_{\text{min}}$ is the smallest eigenvalue of the matrix $\bSigma_0^{\frac12}\,\bSigma_1\,\bSigma_0^{\frac12}$.
The term $\bSigma_0^{\frac12}\,\bSigma_1\,\bSigma_0^{\frac12} - \beta^2 \bI$ is the multi-dimensional version of the Gaussian squeezing coefficient of quantum mechanics \cite{ford2002wave}. 
Finally, we note that the solution above reduces to the known solution of the Benamou-Brenier Optimal Transport Problem in the limit $\beta(t)=0$. 
In its formulation of the Optimal Mass Transport problem, the absence of stochastic effects leads to the objective to be the minimization of the kinetic energy. 
Indeed, setting $\beta=0$ in the QSBP solution \eqref{eq:gauss_solution} we obtain
{\small
$
\bSigma(t) = \bSigma_0^{-\frac12}\Bigl[(1-t)\bSigma_0 + t  \Bigl( \bSigma_0^{\frac12}\,\bSigma_1\,\bSigma_0^{\frac12} \Bigr)^{\frac12}  \Bigr]^2\bSigma_0^{-\frac12}~,
$}
in accordance with the known expression \cite{dowson1982frechet, onken2021ot}.
In table \ref{table:combined_lagrangians} we summarized the defining equations and the solutions for various bridging problem formulations. We note that the Benamou-Brenier OT problem is the $\beta=0$ limit
of both the ``classical'' SBP and the quantum SBP.
In Figure \ref{fig:gaussian_expansion}, we illustrate two examples of the dynamics governed by \eqref{eq:gauss_solution} for different values of $\beta$. In (a), we observe that for $\beta = 0$, the standard deviation evolves linearly throughout the bridging process. In contrast, for higher values of $\beta$, the intermediate Gaussian distributions undergo an initial squeezing phase before relaxing toward the target distribution. This behavior reflects the influence of the quantum potential, which is jointly minimized with the kinetic energy along the path.
Figure \ref{fig:gaussian_expansion}c presents a two-dimensional example. Here, we also observe non-local effects induced by the quantum regularization. While for $\beta = 0$ the covariance evolves independently along each dimension in a linear fashion, larger values of $\beta$ result in a more global deformation of the distribution’s shape. Notably, this includes a rotation-like effect, despite the absence of any explicit rotational term in the SDE parametrization \cite{bertolini2025generative}.

\section{Evolution of the Gaussian Mixture model: Algorithm}

\begin{minipage}[t]{0.5\textwidth}
Building on the analytical solution to the QSBP, we extend the Gaussian Mixture Model (GMM)-Evolution algorithm \cite{korotin2023light} to learn a bridge between the data distributions $\pi_0(\bx)$ and $\pi_1(\bx)$. Specifically, we define a process mediated by a mixture of Gaussian distributions of the form: \begin{align} p(\bx,t) = \sum_k \alpha_k \cN\left(\bx; \bmu_k(t), \bSigma_k(t)\right)~. \end{align} At time $t=0$, we fit the initial density using a GMM, and allow each Gaussian component to evolve independently over time. 
The evolved probability distribution is then fitted to $\pi_1(\bx)$, ensuring a consistent representation of the wavepacket dynamics. These steps are repeated iteratively until convergence. At the end of the procedure, we obtained the GMM parameters that best fit both $\pi_0$ at $t=0$ and $\pi_1$ at $t=1$.
\end{minipage}%
\hfill%
\begin{minipage}[t]{0.47\textwidth}
\vspace{-0.65cm}
\begin{algorithm}[H]
\caption{GMM Wavepacket Propagation}
\label{alg: gmm}
\begin{algorithmic}[1]
\State \textbf{Initialize:}
\State $N_{\text{Gaussians}}$ \Comment{Number of GMM components}
\State $\theta_0 = \{\alpha_k, \bmu_k(0), \bSigma_k(0)\}, \quad k=1:N$ \Comment{Initial GMM parameters}
\Repeat
\State Sample $n$ points $\{x_0^{(i)}\}_{i=1}^{n}$ from $\pi_0(\bx)$
\State Fit GMM parameters $\theta_0$ to $\{x_0^{(i)}\}_{i=1}^{n}$
\State Propagate wavepackets via \eqref{eq:gauss_solution}:
\State $\theta_1 = \text{Propagate}(\theta_0)$
\State Sample $n$ points $\{x_1^{(i)}\}_{i=1}^{n}$ from $\pi_1(\bx)$
\State Fit GMM parameters $\theta_1$ to $\{x_1^{(i)}\}_{i=1}^{n}$
\Until{convergence}
\State \textbf{Output:}  $\theta^\star = \{\alpha_k, \bmu_k(t), \bSigma_k(t)\}_{k=1}^N$
\end{algorithmic}
\end{algorithm}
\end{minipage}

\begin{wraptable}{r}{0.45\textwidth}
\includegraphics[width=0.45\textwidth]{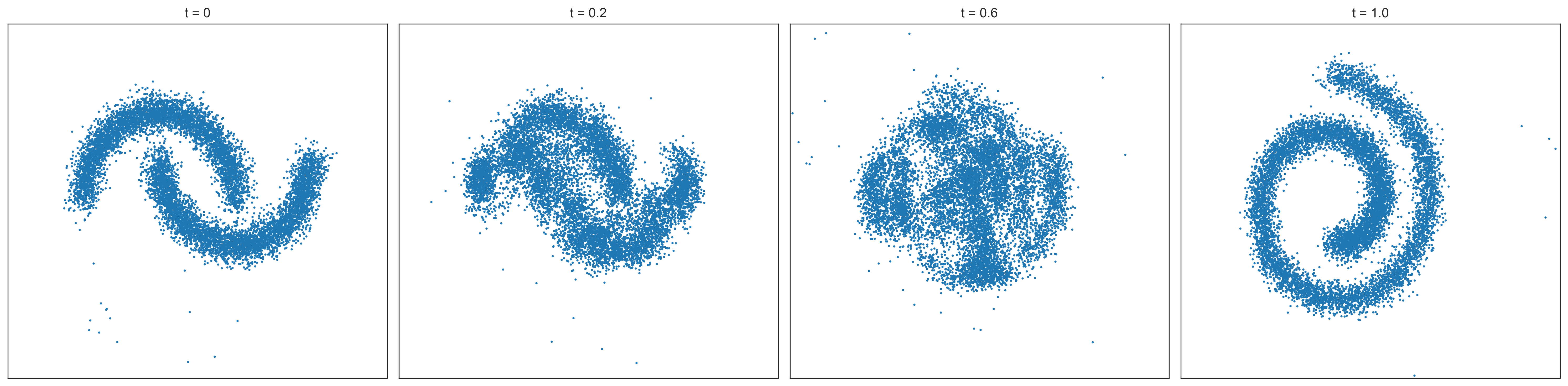}{\centering}
    \phantomsection
    \captionof{figure}{Evolution of a GMM for the moon-to-swiss role dataset with 500 Gaussians.}
    \label{fig:toy}
\end{wraptable}
For simplicity, we omit the mixing term in the Bohm potential \eqref{eq:bohm_mixture}. As shown in Figure \ref{fig:bohm}, this term mainly affects regions far from the primary modes, which are distant from the data distribution. Additionally, its magnitude is generally small compared to the contribution from the individual Gaussian components, resulting in only a minor correction to the overall potential.
While our approach shares similarities with the GMM-based method in \cite{korotin2023light}, it fundamentally describes a different evolutionary dynamic. Figure \ref{fig:toy} shows the learned evolution for a GMM with 500 components, transitioning between the double moons and Swiss roll datasets.

\section{Experiments}
\label{sec: experiments}

\subsection{Single cell population dynamics}

\begin{table}[t!]
\caption{Method performance on the single-cell evolution dataset, measured by EMD-2 distance.}
\centering
\label{ss-stable}
\vskip 0.1in
\begin{small}
\begin{sc}
\begin{adjustbox}{width=0.8\textwidth}
\begin{tabular}{lccccr}
\toprule
Method & $W_2$, $t_1$ & $W_2$, $t_2$ & $W_2$, $t_3$ & $W_2$, $t_4$\\
\midrule
QSBP (ours)   & 0.68 $\pm$ 0.02 & 0.81 $\pm$ 0.03 & 0.85 $\pm$ 0.03 & 0.85  $\pm$ 0.02 \\
OT-Flow       & 0.83 & 1.10 & 1.07 & 1.05 \\
TrajectoryNet & 0.73 & 1.06 & 0.9 & 1.01 \\
IPF           & 0.73 $\pm$ 0.02 & 0.89 $\pm$ 0.03 & 0.84 $\pm$ 0.02 & 0.83 $\pm$ 0.02 \\
SB-FBSDE      & 0.56 $\pm$ 0.01  & 0.80 $\pm$ 0.03 & 1.00 $\pm$ 0.02 & 1.00 $\pm$ 0.01 \\
NLSB          & 0.71 $\pm$ 0.02 & 0.86 $\pm$ 0.03 & 0.83 $\pm$ 0.02 & 0.79 $\pm$ 0.01 \\
NeuralSDE     & 0.69 $\pm$ 0.02 & 0.91 $\pm$ 0.03 & 0.85 $\pm$ 0.03 & 0.81 $\pm$ 0.02 \\
eAM           & 0.58 $\pm$ 0.02 & 0.77 $\pm$ 0.02 & 0.72 $\pm$ 0.01 & 0.74 $\pm$ 0.02 \\
Theoretical minimum   & 0.57  & 0.71 & 0.74  & 0.73 \\
\bottomrule
\end{tabular}
\end{adjustbox}
\end{sc}
\end{small}
\label{table: results}
\end{table}

We apply our method to learn single-cell RNA trajectories from a dataset of human embryonic stem cells \cite{moon2019visualizing} evolving into different cell lineages over a period of 27 days. Single-cell sequencing of the cell population was performed at five different time snapshots (days 0-3, 6-9, 12-15, 18-21, and 24-27). The dataset represents four probability distributions conditioned on these time points. Our goal is to infer evolutionary time trajectories from uncoupled samples of single cells at different times.
We use the experimental setup from \cite{koshizuka2022neural} for train, validation, and test datasets, using the first five principal components as single-cell representations. Data preprocessing is conducted according to \cite{tong2020trajectorynet}.
We compare our method to seven recently developed approaches for inferring population dynamics:
Optimal transport flow \cite{onken2021ot}, TrajectoryNet \cite{tong2020trajectorynet}, Iterative Proportional Fitting based on Schr\"odinger Bridge Problem \cite{de2021diffusion}, Schr\"odinger Bridge solver based on FB-SDE theory \cite{chen2021likelihood}, NLSB method \cite{koshizuka2022neural}, NeuralSDE \cite{li2020scalable}, and action matching method \cite{neklyudov2022action}.
The training protocol and data split are adopted from \cite{koshizuka2022neural}, along with their reported values. Values for the action matching method (eAM) were taken from \cite{neklyudov2022action}, which also followed the same training protocol.
As a metric, we compute the Wasserstein distance between the test marginal distribution and the simulated distribution at time point $t_k$, evaluated 100 times to compute the standard deviation. Simulations were initiated at the previous or next ground truth time points ($t_{k-1}$ or $t_{k+1}$). Results are summarized in Table \ref{table: results}.
Our method demonstrates comparable results to other state-of-the-art approaches, 
and it is very close to the theoretical minimum defined as the Wasserstein distance between the training and test datasets. 

\subsection{Unpaired Image to Image Translation}

For the unpaired image-to-image translation, we adapted the experimental setup from ~\cite{korotin2023light}, which is based on the ALAE dataset \cite{pidhorskyi2020adversarial} and includes the provided encoder and decoder. We used images encoded into the latent space using the Adversarial Latent Autoencoder.
We trained two Gaussian mixture models in the latent space: one with 10 components for images labeled with age $> 18$, and another for images labeled with age $< 18$. The qualitative results of the image translation are shown in Figure ~\ref{fig:deaging}.
\begin{figure}[t!]  
    \centering
    \includegraphics[scale=0.35]{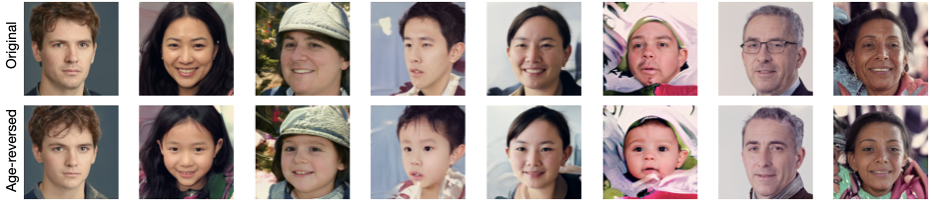}{\centering}
    \caption{Pairs of original images (top row) and corresponding de-aged pairs (bottom row).}
    \label{fig:deaging}
\end{figure}

\subsection{Unpaired Molecular Toxicity Translation in the Tox21 Dataset}

\begin{wraptable}{r}{0.55\textwidth}
\vspace{-1em}
\centering
\caption{Percentage of molecules translated from non-toxic to toxic class.}
\label{table: molecules}
\begin{tabular}{lcccc}
\toprule
\textbf{Toxicity} & \textbf{Class.~F1} & \textbf{S1} & \textbf{S5} & \textbf{S10} \\
\midrule
SR-MMP & 0.67 & 46.3\% & 53.3\% & 54.9\% \\
NR-AhR & 0.58 & 35.8\% & 43.0\% & 44.0\% \\
\bottomrule
\end{tabular}
\vspace{-1em}
\end{wraptable}
Next, we demonstrate unpaired translation of molecules in 512-dimensional CDDD latent space \cite{winter2019learning}. Specifically, we address the task of translating molecular representations from the non-toxic class to the toxic class without requiring paired data. We train a MLP classifier to distinguish between toxic and non-toxic molecules. Using the algorithm described in Algorithm ~\ref{alg: gmm}, we perform probabilistic transport in the latent space between two molecular classes using 30 Gaussian wave packets. 
We focus on two toxicity endpoints (SR-MMP and NR-AhR) from the Tox21 dataset \cite{huang2017editorial}, selected for their abundance of labeled data. The trained classifier is used as an oracle to evaluate toxicity post-translation. We select 1000 non-toxic molecules and generate modified latent samples through the transport procedure. For each source molecule, we generate 1 (S1), 5 (S5) , and 10 (S10) samples, respectively, to evaluate the probability of successful translation into the toxic class. Table \ref{table: molecules} reports the fraction of translated molecules classified as toxic by the oracle. These results suggest that our method enables fast and controllable molecular editing in the absence of paired supervision. See Appendix ~\ref{sec: molecules} for more details.

\subsection{Mean Field Games, Population Dynamics, and Quantum Lagrangian Minimization}

Mean Field Games (MFG) \cite{lasry2006jeux, lasry2007mean} provide a theoretical framework for modeling complex game-theoretic problems involving a large number of interacting agents, where the number of players tends toward infinity. 
The QSBP is closely related to a particular instance of MFG where interactions between agents are important, for example, when a school of fish clusters to enhance collective security. In this case, the dynamics are governed by the Schrödinger equation \cite{swiecicki2016schrodinger}. 
For our mean field experiments, we consider classical crowd navigation environments: the ``S-tunnel'' with asymmetric obstacles, and the ``U-tunnel'' environment with a narrow passage, previously studied in works on MFG \cite{ruthotto2020machine, lin2021alternating, liu2022deep, liu2023generalized, liu2024deep}. In our setup, we model the population evolution between the initial and final distributions using a single Gaussian, evolving through a predefined obstacle configuration. 
We adopt a variational Lagrangian formalism to find the optimal trajectory. We initialize the trajectory using the RRT* path planning algorithm \cite{karaman2011sampling}, which builds a path connecting the start and target distributions. 
The trajectory is modeled by a time-discretized set of Gaussian distributions parametrized by the mean $\bmu(t)$ and variance $\bSigma(t)$ for $t = 0, \dd t, \dotsc, 1$. For simplicity, we consider the case of a diagonal Gaussian.  Given $\bmu( t_i)$ and $\bSigma(t_i)$, the kinetic and potential energies along the trajectory in the diagonal case are computed as
\begin{align}
\label{eq: kinetic_potential_mfg}
\mathcal{K} &=
\sum_{t_i=0}^{1} \left[
\|\dot{\bmu}(t_i)\|^{2}
+\frac14\operatorname{Tr}\, \bigl(\bSigma^{-1}(t_i)\dot{\bSigma}(t_i)\dot{\bSigma}(t_i)\bigr)\right]~,\nonumber\\
\mathcal{U} &= \beta^{2}\, \sum_{t_i=0}^1 \,\operatorname{Tr} \bigl(\bSigma^{-1}(t_i)\bigr)~.
\end{align}
The evolution of individual samples is governed by the stochastic update (see Appendix~\ref{sec: mfg_gaussian})
\begin{equation}
\label{eq: population_update_mfg}
\bx_{i+1} = \bmu(t_i + 1) + \sqrt{1 -2 \beta} \, \bSigma(t_{i+1})^{\frac12}\bSigma(t_i)^{-\frac12} (\bx_i - \bmu(t_i)) + \sqrt{2\beta}\, \dd \bW~,
\end{equation}
ensuring that at each time $t_i$ the sample population has mean $\bmu(t_i)$ and variance $\bSigma(t_i)$.
The final population dynamics trajectory is obtained by minimizing the following Lagrangian objective
\begin{align}
\label{eq: mfg_objective}
\mathcal{L}_{\text{QSB}} &=
\int_{0}^{1}\int_{\mathbb{R}^{n}} 
\Bigl[\|\mathbf{v}(\bx,t)\|^{2} - \|\mathbf{u}(\bx,t)\|^{2}\Bigr]\,
p(\bx,t)\,\mathrm{d}\bx\,\mathrm{d}t \nonumber\\
&= \mathcal{K} - \mathcal{U}~.
\end{align}
We minimize the objective \eqref{eq: mfg_objective} with respect to the parameters $\bmu(t)$ and $\bSigma(t)$ over the time interval $t = 0, \dd t, \dotsc, 1$. 
At each optimization step, we recompute the derivatives 
\begin{align}
\dot{\bmu}(t) & = \frac{\bmu(t+1) - \bmu(t)}{\dd t}~,  &\dot{\bSigma}(t) &= \frac{\bSigma(t+1) - \bSigma(t)}{\dd t}~,
\end{align}
evaluate the updated kinetic and potential energy contributions, and propagate the individual samples according to the stochastic update rule \eqref{eq: population_update_mfg}.

\begin{figure}[t!]  
    \centering
\includegraphics[width=\textwidth]{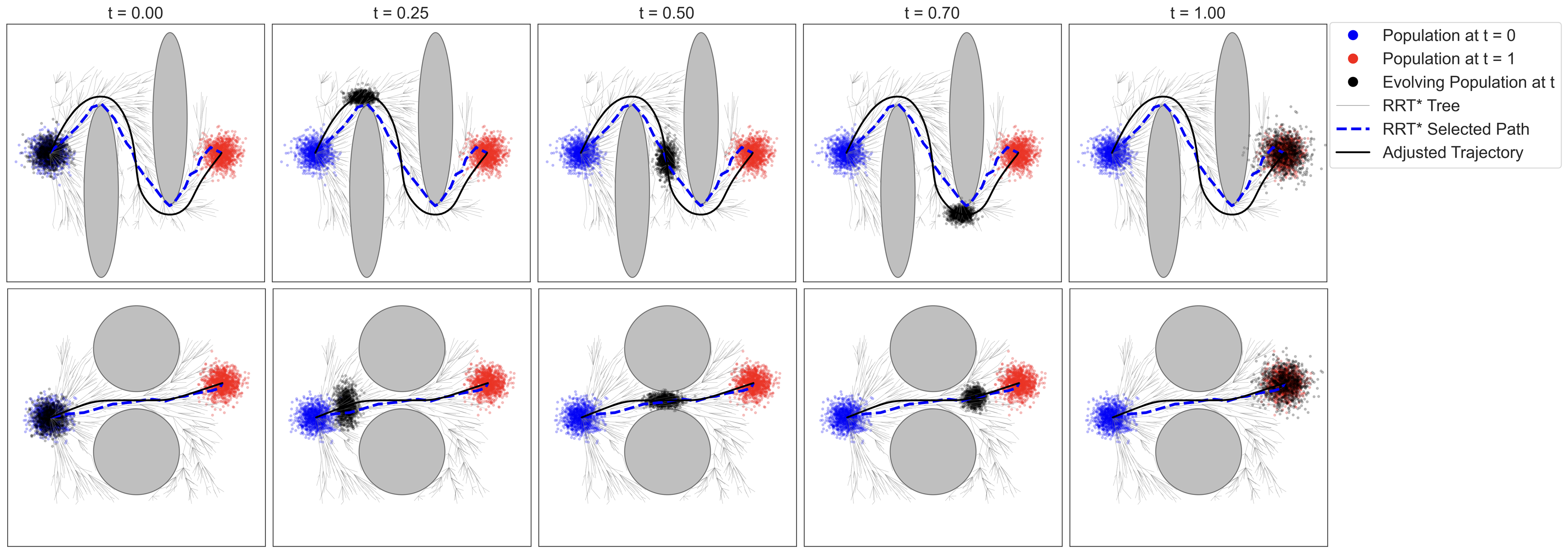}
    \caption{Learned Gaussian evolution dynamics 
    in the S-tunnel (top) and the U-tunnel environment (bottom). The populations at $t = 0,1$ are Gaussian distributions. The RRT* algorithm is used to construct a tree (shown in gray), from which an initial path (dashed blue line) is generated. The solid black line represents the optimal trajectory of the Gaussian mean $\bmu(t)$.}
    \label{fig:mfg}
\end{figure}

\begin{minipage}[t]{0.4\textwidth}
To discourage the population from entering obstacle regions, we include an additional penalty term $\lambda_{\mathrm{obs}} \mathcal{L}_{\mathrm{obs}}$ based on the log-likelihood of samples overlapping with obstacles.
This variational procedure gradually refines the trajectory while maintaining consistency of the population statistics at each time step. The method leverages the RRT* initialization to avoid poor local minima and ensures smooth evolution of the mean and covariance structures. A pseudocode description of the optimization steps is provided on the right.
The resulting learned dynamics between the initial and final distributions are illustrated in Figure~\ref{fig:mfg}, showing the optimized population flows in both the S-tunnel and U-tunnel environments, adjusting their variances and means for optimal passage.
\end{minipage}%
\hfill%
\begin{minipage}[t]{0.57\textwidth}
\vspace{-0.4cm}
\begin{algorithm}[H]
\caption{Variational MFG Trajectory Optimization}
\begin{algorithmic}[1]
\State \textbf{Input:} boundary densities $p_0(\bx),\,p_1(\bx)$, environment, noise level $\beta$, penalty weight $\lambda_{\mathrm{obs}}$
\State \textbf{(i) RRT* warm-start}
    \Statex \quad $\mathcal{P}^{(0)} \gets \texttt{RRT*Path}(p_0, p_1)$
    \Statex \quad Discretize $\mathcal{P}^{(0)}$ into $\{x_t^{(0)}\}_{t=0}^{T}$
\State \textbf{(ii) Initial Gaussian parametrization}
    \Statex \quad $\theta^{(0)} = \{(\bmu_t^{(0)}, \bSigma_t^{(0)})\}_{t=0}^{T}$
\State \textbf{(iii) Trajectory optimization}
\Repeat
    \State Compute energy $\mathcal{K}(\theta)$, $\mathcal{U}(\theta)$, and obstacle penalty $\mathcal{L}_{\mathrm{obs}}(\theta)$
    \State $\nabla_{\theta} \mathcal{L} \gets \partial (\mathcal{L}_{QSB} + \lambda_{\mathrm{obs}} \mathcal{L}_{\mathrm{obs}}) / \partial \theta$
    \State $\theta \gets \theta - \eta \, \nabla_{\theta} \mathcal{L}$
    \State Update population trajectory via \eqref{eq: population_update_mfg}
\Until{convergence}
\State \textbf{Output:} optimal Gaussian trajectory $\theta^\star = \{(\bmu_t^\star, \bSigma_t^\star)\}_{t=0}^{T}$
\end{algorithmic}
\end{algorithm}
\end{minipage}

\section{Discussion and Conclusions}
\label{s:discussion}

Our work represents a step toward greater flexibility in modeling generative processes between two arbitrary distributions. By selecting a different optimality condition, formulated in our framework as a distinct Lagrangian, we derive a bridging problem that follows the Schrödinger equation from quantum mechanics. In the case of Gaussian measures, we provide exact closed-form solutions, effectively placing our models on the same footing as traditional Schrödinger bridge problems in terms of both usability and theoretical understanding. A particularly promising avenue of research is to apply these processes to physical data governed by similar dynamics, such as in molecular dynamics simulations, where the system’s time evolution is described by the Schrödinger equation.

\bibliographystyle{unsrt}  
\bibliography{references}  

\newpage

\appendix

\section{Proofs of Results of Section 2}
\label{app:proof_sec2}

In this appendix we derive the results presented in section \ref{s:classical2quantum} of the main text. 

\begin{proposition}
The solution of the QSBP (Definition \ref{def:QSPB}) is described by the quantum Hamilton-Jacobi equation
\begin{align}
\label{eq:quantumHJ_app}
    \partial_t S(\bx) + \frac12|\nabla S(\bx)|^2 = -Q(\bx)~,
\end{align}
where $Q(\bx)$ is known as the Bohm potential (or quantum potential) and is given as
\begin{align}
\label{eq:bohm_app}
    Q(\bx) = -2 \beta(\bx)^2\frac{\Delta \sqrt{p_t(\bx)}}{\sqrt{p_t(\bx)}}= -\beta(t)^2 (\Delta \log p_t(\bx)+ \frac12 |\nabla\log p_t(\bx)|^2)~.
\end{align}
\end{proposition}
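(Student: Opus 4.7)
The strategy is to recast the Lagrangian into the Guerra--Morato (drift/osmotic) form, treat the Fokker--Planck equation as a constraint via a Lagrange multiplier $S(\bx,t)$, and read off the stationarity conditions. First I would rewrite the cost via the decomposition $\bfb_+ = \bv + \bu$ with $\bv = \bfb_+ - \beta\nabla\log p$ and $\bu = \beta\nabla\log p$. Integrating by parts on the divergence term,
\begin{equation*}
\int (\nabla\cdot \bfb_+)\,p\,\dd\bx \;=\; -\int \bfb_+\cdot\nabla p\,\dd\bx \;=\; -\tfrac{1}{\beta}\int \bfb_+\cdot\bu\,p\,\dd\bx~,
\end{equation*}
and substituting $\bfb_+ = \bv+\bu$ produces (up to the standard $\beta$-normalization convention used throughout the paper) the Guerra--Morato Lagrangian density $\tfrac{1}{2}(|\bv|^2 - |\bu|^2)\,p$. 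The Fokker--Planck equation \eqref{eq:FPE} simplifies in these variables to the continuity equation $\partial_t p + \nabla\cdot(p\,\bv) = 0$, so $\bv$ is the natural primary variable.

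Next, I would introduce a Lagrange multiplier $S(\bx,t)$ enforcing the continuity equation and consider the augmented action
\begin{equation*}
\mathcal{A}[p,\bv,S] \;=\; \int_0^1\!\!\int \Bigl\{ \tfrac{1}{2}\bigl(|\bv|^2 - |\bu|^2\bigr)\,p \;+\; S\bigl[\partial_t p + \nabla\cdot(p\,\bv)\bigr]\Bigr\}\,\dd\bx\,\dd t~.
\end{equation*}
Integration by parts (assuming decay of $p$ at infinity and prescribed boundary marginals, so that $\partial_t S$ and $\nabla S$ absorb the $S$-derivatives) rewrites the multiplier part as $-\int p\,(\partial_t S + \bv\cdot\nabla S)\,\dd\bx\,\dd t$. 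The $\bv$-variation then yields the zero-order condition $p(\bv - \nabla S) = 0$, i.e.\ $\bv = \nabla S$, consistent with the identification made in the proposition statement.

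The crucial step is the variation with respect to $p$. After substituting $\bv = \nabla S$, the density becomes
\begin{equation*}
\mathcal{A} \;=\; \int\!\!\int \Bigl[-p\,\partial_t S \;-\; \tfrac{1}{2}p\,|\nabla S|^2 \;-\; \tfrac{1}{2}\beta^2\,\frac{|\nabla p|^2}{p}\Bigr]\,\dd\bx\,\dd t~,
\end{equation*}
where I used $-\tfrac{1}{2}p|\bu|^2 = -\tfrac{1}{2}\beta^2|\nabla p|^2/p$. Applying the Euler--Lagrange operator $\partial/\partial p - \nabla\cdot\partial/\partial(\nabla p)$ to the osmotic term produces
\begin{equation*}
\tfrac{1}{2}\beta^2|\nabla\log p|^2 + \beta^2 \Delta\log p \;=\; -Q(\bx)~,
\end{equation*}
by the identity in \eqref{eq:bohm_app}. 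Collecting the three contributions gives the stationarity condition $-\partial_t S - \tfrac{1}{2}|\nabla S|^2 - Q = 0$, which is exactly \eqref{eq:quantumHJ_app}.

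\textbf{Anticipated obstacle.} The only subtle part is the variational derivative of the osmotic term $-\tfrac{1}{2}\beta^2|\nabla p|^2/p$: it depends on both $p$ and $\nabla p$, and the two contributions (one algebraic, one from the divergence of $\partial/\partial(\nabla p)$) must combine into precisely the two pieces $\tfrac{1}{2}|\nabla\log p|^2$ and $\Delta\log p$ that assemble the Bohm potential. A secondary bookkeeping item is to confirm the $\beta$ powers are consistent between the definition and the Guerra--Morato rewriting; this can be settled by a Madelung substitution $\psi = \sqrt{p}\,e^{iS/(2\beta)}$ as a sanity check, since the same HJE is known to emerge from the real part of the Schrödinger equation established in the next proposition.
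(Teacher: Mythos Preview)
Your proposal is correct and uses the same Lagrange-multiplier machinery as the paper's proof. The paper keeps $\bfb_+$ as the primary variable: it adjoins the continuity constraint with multiplier $S$, varies in $\bfb_+$ to obtain $\bfb_+=\beta\nabla\log p_t+\nabla S$ (hence $\bv=\nabla S$), substitutes back, and then performs a chain of integration-by-parts manipulations until the action collapses to $\int\bigl[-Q-\partial_tS-\tfrac12|\nabla S|^2\bigr]\,p_t\,\dd\bx\,\dd t$, from which the quantum HJ equation is read off as the vanishing condition. Your version differs only in organization: you first convert the cost to the Guerra--Morato form $\tfrac12(|\bv|^2-|\bu|^2)$ via the $\bfb_+=\bv+\bu$ decomposition, then treat $(\bv,p)$ as the independent pair and extract the HJ equation by an explicit Euler--Lagrange variation of the osmotic term $-\tfrac12\beta^2|\nabla p|^2/p$. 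The preliminary rewrite buys shorter algebra (the cross terms cancel before, not after, the constrained optimization), and the explicit $p$-variation makes the status of \eqref{eq:quantumHJ_app} as a genuine stationarity condition more transparent than the paper's ``vanishes when'' phrasing; beyond that the two arguments coincide.
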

\begin{proof}
As mentioned above, the FPE \eqref{eq:FPE} is valid no matter which Lagrangian is chosen for optimality. 
Therefore, it must be satisfied also in our case. We impose the constraint directly in the Lagrangian through a Lagrange multiplier
\begin{align}
  \cL_{\text{QSB}} &=  \int_\Sigma \left[ \left( \frac12 |\bfb_+(\bx, t)|^2 +  \beta(t) \nabla \cdot \bfb_+(\bx, t)\right) \,  p_t(\bx) + S(\bx, t) \left[ \p_t p_t(\bx)  + \nabla \cdot (\bv(\bx, t) \, p_t(\bx))\right]\right] \dd\bx\, \dd t \nonumber\\
  &=\int_\Sigma \left[  \frac12 |\bfb_+(\bx, t)|^2 -  \beta(t)\bfb_+(\bx, t)\cdot \nabla \log p_t(\bx) - \p_tS(\bx, t)  - \nabla S(\bx, t) \cdot \bv(\bx, t)\right]  p_t(\bx) \dd\bx\, \dd t \nonumber\\
\end{align}
Plugging in the definition of $\bv(\bx, t)$ \eqref{eq:osmotic} and setting to zero the derivative of the above Lagrangian with respect to $\bfb_+$, we get its optimal value 
\begin{align}
    \bfb_+(\bx, t) = \beta(t)\nabla \log p_t(\bx) + \nabla S(\bx, t)~, \qquad \Longrightarrow \qquad \bv(\bx, t) = \nabla S(\bx, t)~.
\end{align}
Substituting the above expressions into the minimization objective, we obtain
\begin{align}
  \cL_{\text{QSB}}
  &=\int_\Sigma \left[  \frac12 \left(\beta(t)^2|\nabla \log p_t(\bx)|^2 + |\nabla S(\bx, t)|^2 + 2\beta(t)\nabla \log p_t(\bx) \cdot \nabla S(\bx, t) \right) \right. \nonumber\\
 & \qquad\quad \left.-  \beta(t)^2|\nabla \log p_t(\bx)|^2 
 -\beta(t)\nabla S(\bx, t) \cdot \nabla \log p_t(\bx)
 - \p_tS(\bx, t)  - |\nabla S(\bx, t)|^2 \right]  p_t(\bx) \dd\bx\, \dd t \nonumber\\
 &=\int_\Sigma \left[ - \frac12\beta(t)^2|\nabla \log p_t(\bx)|^2
 - \p_tS(\bx, t)  - \frac12|\nabla S(\bx, t)|^2 \right]  p_t(\bx) \dd\bx\, \dd t \nonumber\\
    &=\int_\Sigma \left\{\left[  \frac12\beta(t)^2|\nabla \log p_t(\bx)|^2
  - \p_tS(\bx, t)  - \frac12|\nabla S(\bx, t)|^2 \right]  p_t(\bx)
  - \beta(t)^2\nabla p_t(\bx) \cdot \nabla\log p_t(\bx)\right\}\dd\bx\, \dd t \nonumber\\
      &=\int_\Sigma \left[ \beta(t)^2\left( \frac12|\nabla \log p_t(\bx)|^2+  \Delta\log p_t(\bx) \right)
  - \p_tS(\bx, t)  - \frac12|\nabla S(\bx, t)|^2 \right]  p_t(\bx)
\dd\bx\, \dd t~,
\end{align}
which vanishes when
\begin{align}
  \p_tS(\bx, t)  + \frac12|\nabla S(\bx, t)|^2
  =    \beta(t)^2\left( \frac12|\nabla \log p_t(\bx)|^2+  \Delta\log p_t(\bx) \right) = - Q(\bx, t)~,
\end{align}
which concludes the proof.
\end{proof}

\begin{proposition}
The stationary points of the QSBP satisfy the time-dependent Schr{\"o}dinger equation
\begin{align}
\label{eq:schroedinger_app}
    i \p_t\psi(\bx, t) &= -\beta(t) \Delta \psi(\bx, t)   & \psi(\bx)  \psi^*(\bx) &= p_t(\bx)~,
\end{align}
where the wave function $\psi(\bx, t) = \sqrt{p_t(\bx)}e^{\frac{i}{2\beta(t)} S(\bx, t)}$ and phase $S$ are related to the drift velocity $\bv$ via a gradient $\bv = \nabla S$.
\end{proposition}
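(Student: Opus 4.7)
The plan is to verify directly that the Madelung ansatz $\psi(\bx, t) = \sqrt{p_t(\bx)}\, e^{\frac{i}{2\beta} S(\bx, t)}$ satisfies the Schr\"odinger equation \eqref{eq:schroedinger} whenever $p_t$ and $S$ obey, respectively, the continuity equation \eqref{eq:FPE} with $\bv = \nabla S$ and the quantum Hamilton-Jacobi equation \eqref{eq:quantumHJ}. The preceding proposition already established the latter at stationary points of the QSBP, and the former is the constraint enforced in that same derivation, so once the two equations are shown to be exactly the imaginary and real parts of $i\p_t\psi + \beta\Delta\psi = 0$, the proof is complete.

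First I would write $\psi = e^{h}$ with $h = \tfrac12 \log p_t + \tfrac{i}{2\beta} S$, so that
\begin{align}
\frac{\p_t\psi}{\psi} &= \frac{\p_t p_t}{2 p_t} + \frac{i}{2\beta}\p_t S~,
&\frac{\nabla\psi}{\psi} &= \frac12 \nabla\log p_t + \frac{i}{2\beta}\nabla S~,
\end{align}
and then use the standard identity $\Delta\psi/\psi = \Delta h + \nabla h \cdot \nabla h$ (the dot product taken algebraically in the complex sense) to expand
\begin{align}
\frac{\Delta\psi}{\psi} = \frac12 \Delta\log p_t + \frac14 |\nabla\log p_t|^2 - \frac{1}{4\beta^2}|\nabla S|^2 + \frac{i}{2\beta}\bigl(\Delta S + \nabla\log p_t\cdot\nabla S\bigr)~.
\end{align}
Substituting into the Schr\"odinger equation and cancelling the common factor $\psi$ leaves a complex scalar equation whose real and imaginary parts must vanish independently.

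The imaginary part, after multiplication by $2 p_t$, reads $\p_t p_t + p_t\Delta S + \nabla p_t\cdot\nabla S = 0$, which is precisely $\p_t p_t + \nabla\cdot(p_t\bv) = 0$ with $\bv = \nabla S$, i.e.\ the continuity equation already imposed in the QSBP. The real part, after multiplication by $-2\beta$ and rearrangement, gives
\begin{align}
\p_t S + \tfrac12 |\nabla S|^2 = \beta^2\bigl(\Delta\log p_t + \tfrac12|\nabla\log p_t|^2\bigr) = -Q(\bx)~,
\end{align}
which is exactly the quantum Hamilton-Jacobi equation of the preceding proposition. Both equations thus holding at QSBP stationary points, the ansatz satisfies the Schr\"odinger equation, and the normalization $|\psi|^2 = p_t$ is manifest from the definition.

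The main obstacle I expect is essentially bookkeeping: carefully expanding $\nabla h \cdot \nabla h$ in the complex sense and keeping track of which cross-terms are real and which are imaginary. A secondary subtlety worth flagging is the time-dependence of $\beta$: if $\dot\beta \neq 0$, then $\p_t\psi$ picks up an additional $-i S \dot\beta/(2\beta^2)$ contribution that does not split cleanly between the two target equations, so the identification is strictly exact for constant $\beta$ and requires either restriction to that regime or a suitable modification of the phase normalization in the general case.
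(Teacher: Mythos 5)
Your proposal is correct and follows essentially the same route as the paper: substitute the Madelung ansatz into the Schr\"odinger equation, divide by $\psi$, and identify the imaginary and real parts with the continuity equation (for $\bv=\nabla S$) and the quantum Hamilton--Jacobi equation, respectively. Your closing remark about the extra $-i S\dot\beta/(2\beta^2)$ term when $\beta$ is time-dependent is a valid caveat that the paper's proof silently omits, so the identification is, as you say, exact only for constant $\beta$ (or with a modified phase normalization).
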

\begin{proof}
    We want to prove that the function $\psi(\bx, t) = \sqrt{p_t(\bx)}e^{\frac{i}{2\beta(t)} S(\bx, t)}$, where $S(\bx, t)$ satisfies the Schr\"odinger equation. 
Substituting the definition of $\psi$ into the Schr\"odinger equation
\begin{align}
   &i \left(\frac{\p_t p_t(\bx)}{2p_t(\bx)} + \frac{i}{2 \beta(t)} \p_t S(\bx, t)\right) \psi(\bx, t) \nonumber\\
   &\qquad\qquad\qquad= - \beta(t) \left[\frac12 \Delta \log p_t(\bx)  + \frac{i}{2\beta(t)}\Delta S(\bx, t) + \left|\frac12 \nabla \log p_t(\bx) + \frac{i}{2 \beta(t)} \nabla S(\bx, t)\right|^2 \right] \psi(\bx, t)
\end{align}
and dividing by $\psi$ and separating real and imaginary parts we obtain 
\begin{align}
    &\Im:  &\p_t p_t(\bx) 
   &= - \beta(t) \left[\frac{1}{2\beta}\Delta S(\bx, t) + \frac{1}{2 \beta} \nabla \log p_t(\bx) \cdot \nabla S(\bx, t) \right] 2p_t(\bx)\nonumber\\
   &&&= - p_t(\bx)\nabla\cdot \bv(\bx, t) - \nabla p_t(\bx) \cdot \bv(\bx, t)\nonumber\\
   &&&= - \nabla\cdot \left(\bv(\bx, t) p_t(\bx) \right)~,\nonumber\\
&\Re:  & \p_t S(\bx, t)
   &=  \beta(t)^2 \left[ \Delta \log p_t(\bx)   + \frac12\left| \nabla \log p_t(\bx)\right|^2  -\frac{1}{2 \beta(t)^2}\left| \nabla S(\bx, t)\right|^2 \right] \nonumber\\
   &&&= -\frac12\left| \nabla S(\bx, t)\right|^2 - Q(\bx, t)
\end{align}
which are precisely the FPE equation \eqref{eq:FPE} and the quantum Hamilton-Jacobi equation \eqref{eq:quantumHJ}. As these determine the solution of the system, we showed that the same set of solutions are obtained from the Schrödinger equation, concluding the proof.
\end{proof}

\section{Proof of Main Theorem}
\label{app:proof_thorem}
Our main result is given by the following
\begin{theorem}
Given a probability distribution of the form
\begin{align}
    \label{eq:multi_gaussian_t_app}
    p(\bx, t) = \frac{1}{(2\pi)^{n/2} \sqrt{\det \bSigma(t)}} \exp\left(-\frac12 (\bx - \bmu(t))^\top \bSigma^{-1}(t) (\bx(t) - \bmu(t))\right)~,
\end{align}
it solves the QSBP \eqref{eq:QSBP} with boundary conditions 
$\pi_0(\bx) = \cN(\bx_0, \bSigma_0)$ and $\pi_1(\bx) = \cN(\bx_1, \bSigma_1)$
with 
\begin{align}
\label{eq:gauss_solution_app}
\bmu(t)&=\bmu_0+\mathbf{v}_0\,t~,\nonumber\\
\bSigma(t) &= \bSigma_0^{-\frac12}\Bigl[(1-t)\bSigma_0 + t  \Bigl( \bSigma_0^{\frac12}\,\bSigma_1\,\bSigma_0^{\frac12} - \beta^2 \bI \Bigr)^{\frac12}  \Bigr]^2\bSigma_0^{-\frac12} \;+\;  \, t \beta^2 \bSigma_0^{-1}.
\end{align}
\end{theorem}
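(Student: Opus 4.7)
The plan is to reduce the QSBP to a finite-dimensional matrix ODE system by making two natural ansätze: $p(\bx,t) = \cN(\bx;\bmu(t),\bSigma(t))$ remains Gaussian throughout the interpolation, and the Hamilton--Jacobi potential is quadratic, $S(\bx,t) = \tfrac12 \bx^\top \bA(t)\bx + \bfb(t)^\top \bx + c(t)$ with $\bA(t)$ symmetric. Under this ansatz the drift $\bv = \nabla S = \bA\bx + \bfb$ is affine in $\bx$, which is precisely the class preserving Gaussianity under the continuity equation, so the ansatz is self-consistent.

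First I would substitute the Gaussian $p$ into the continuity equation \eqref{eq:FPE} and match powers of $\bx$, obtaining the coupled equations $\dot{\bmu} = \bA\bmu + \bfb$ and $\dot{\bSigma} = \bA\bSigma + \bSigma\bA$. Next I would substitute the quadratic $S$ together with the explicit Bohm potential for a Gaussian from Section~\ref{ss:bohm_gaussian} into the quantum Hamilton--Jacobi equation \eqref{eq:quantumHJ}, and again match powers of $\bx$. The quadratic coefficient produces the matrix Riccati equation $\dot{\bA} + \bA^2 = \beta^2\bSigma^{-2}$ and the linear coefficient gives $\dot{\bfb} + \bA\bfb = -\beta^2\bSigma^{-2}\bmu$. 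The mean then falls out almost immediately: differentiating $\dot{\bmu} = \bA\bmu + \bfb$ in time and substituting the $\dot{\bA}$ and $\dot{\bfb}$ equations yields $\ddot{\bmu} = (\dot{\bA}+\bA^2)\bmu - \beta^2\bSigma^{-2}\bmu = 0$, and the boundary conditions force the linear interpolation $\bmu(t) = \bmu_0 + (\bmu_1-\bmu_0)t$.

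For the covariance I would eliminate $\bA$ between the two coupled matrix equations. In a basis where the relevant matrices commute, each eigenvalue satisfies the Ermakov--Pinney-type equation $\ddot{\sigma} = \beta^2/\sigma^3$, whose first integral $\dot{\sigma}^2 + \beta^2/\sigma^2 = \mathrm{const}$ immediately shows that $\sigma^2(t)$ is a quadratic polynomial in $t$ offset by a constant multiple of $\beta^2$. Imposing $\sigma(0)=\sigma_0$ and $\sigma(1)=\sigma_1$ pins down the two integration constants, and the scalar result has exactly the shape $\sigma^2(t) = [(1-t)\sigma_0^2 + t\sqrt{\sigma_0^2\sigma_1^2-\beta^2}]^2/\sigma_0^2 + t\beta^2/\sigma_0^2$, which is the diagonal version of \eqref{eq:gauss_solution}. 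Finally I would verify by direct substitution that the full matrix formula solves both derived ODEs: at $t=0$ the squared bracket collapses to $\bSigma_0^2$ giving $\bSigma(0)=\bSigma_0$, and at $t=1$ the $\beta^2$ pieces cancel neatly between the squared bracket and the linear correction $t\beta^2\bSigma_0^{-1}$, yielding $\bSigma(1)=\bSigma_1$.

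The main obstacle is the matrix non-commutativity in the general case, since $\bSigma(t)$, $\bA(t)$, $\bSigma_0$ and $\bSigma_1$ do not generically commute and the scalar Ermakov first-integral argument does not transfer verbatim. The cleanest resolution is to work in the Bures--Wasserstein frame analogous to the Benamou--Brenier $\beta=0$ case: the classical optimal transport map $\bSigma_0^{-1/2}(\bSigma_0^{1/2}\bSigma_1\bSigma_0^{1/2})^{1/2}\bSigma_0^{-1/2}$ gets modified by inserting the quantum squeezing correction $-\beta^2\bI$ inside the matrix square root, producing the structure of \eqref{eq:gauss_solution}. One then checks by direct substitution that this choice together with the linear-in-$t$ offset $t\beta^2\bSigma_0^{-1}$, which captures the residual quantum diffusion of the variance, solves the coupled matrix Riccati system and satisfies the prescribed boundary conditions.
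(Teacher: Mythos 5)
Your overall skeleton is the same as the paper's: assume $p$ stays Gaussian, take $S$ quadratic, match powers of $\bx$ in the continuity equation and the quantum Hamilton--Jacobi equation, and reduce to the coupled system $\dot{\bSigma}=\bA\bSigma+\bSigma\bA$, $\dot{\bA}+\bA^2=\beta^2\bSigma^{-2}$ (the paper's $\bC$ is your $2\bA$). Your derivation of $\ddot{\bmu}=0$ by combining the linear-coefficient equation with $\dot{\bmu}=\bA\bmu+\bfb$ is correct and clean. The problems are in the covariance part.

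First, your scalar Ermakov--Pinney argument does not produce the formula you claim it produces. The first integral $\dot{\sigma}^2+\beta^2/\sigma^2=E$ gives $\tfrac{d^2}{dt^2}\sigma^2=2\dot{\sigma}^2+2\sigma\ddot{\sigma}=2E$, so $\sigma^2(t)=\sigma_0^2+bt+Et^2$ subject to the constraint $4\sigma_0^2E-b^2=4\beta^2$ (equivalently $2\Sigma\ddot{\Sigma}-\dot{\Sigma}^2=4\beta^2$ with $\Sigma=\sigma^2$). Solving this together with $\Sigma(1)=\sigma_1^2$ yields $b=2(g-\sigma_0^2)$ and $E=\bigl[(g-\sigma_0^2)^2+\beta^2\bigr]/\sigma_0^2$ with $g=\sqrt{\sigma_0^2\sigma_1^2-\beta^2}$, i.e.\ $\sigma^2(t)=\bigl([(1-t)\sigma_0^2+tg]^2+t^2\beta^2\bigr)/\sigma_0^2$ --- a $t^2\beta^2$ offset, not the $t\beta^2$ offset you assert. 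The two expressions agree at $t=0$ and $t=1$, so the boundary conditions cannot distinguish them, but only the $t^2$ version satisfies the constraint $4\sigma_0^2E-b^2=4\beta^2$; plugging the $t\beta^2$ version into the Riccati system fails for generic $\sigma_0,\sigma_1,\beta$. You must confront this discrepancy rather than declare a match. (It in fact exposes a slip in the paper's own last simplification step, where $-t^2\beta^2\bSigma_0^{-1}+2t^2\beta^2\bSigma_0^{-1}$ is written as $t\beta^2\bSigma_0^{-1}$; the penultimate line of the paper's proof carries the $t^2$ correctly.)

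Second, the genuinely hard part --- the non-commuting matrix case --- is not actually proved. ``One then checks by direct substitution'' is the entire content of the theorem there, and you neither perform the check nor supply a mechanism that makes it tractable: your eigenvalue reduction requires $\bSigma_0$ and $\bSigma_1$ to commute, which is a restriction on the data, and you would also need to exhibit a symmetric $\bA(t)$ solving both matrix equations simultaneously. The paper's key device, which you are missing, is the complex shift $\bC_Q=\bC+2i\beta\bSigma^{-1}$, which turns the quantum Riccati equation into $\dot{\bC}_Q=-\tfrac12\bC_Q^2$, solved exactly by $\bC_Q=(\tfrac{t}{2}\bI+\bK_1)^{-1}$ with no commutativity assumption; the continuity equation then integrates in closed form, and commutativity is only invoked at the very end to solve the boundary condition for the integration constant, where it can be arranged by choosing $\bOmega$ as a function of $\bSigma_0^{\frac12}\bSigma_1\bSigma_0^{\frac12}$. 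Either adopt that device or actually carry out the matrix substitution (with the corrected $t^2\beta^2\bSigma_0^{-1}$ term); as written, the proposal leaves the theorem unestablished beyond the simultaneously diagonalizable case.
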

\begin{proof}
    We starts by considering the constrains imposed by the continuity equation. 
    Then, we will solve the different Hamilton-Jacobi equation.
The continuity equation takes the form
\begin{align}
\label{eq:cont_eq}
    \partial_t p_t(\bx) + \nabla \cdot (\bv(\bx, t) \, p_t(\bx)) = 0~.
\end{align}
We first compute the quantity $\p_tp_t(\bx)$. The pre-factor  $\frac{1}{(2\pi)^{n/2} \sqrt{\det \bSigma (t)}}$  depends on the determinant of the covariance matrix. Taking the time derivative of this term we obtain
\begin{align}
   \frac{\partial}{\partial t} \left( \frac{1}{\sqrt{\det \bSigma (t)}} \right) &= -\frac12 \frac{1}{\left(\det \bSigma(t)\right)^{3/2}} \frac{\partial}{\partial t} \left(\det \bSigma(t)\right) \nonumber\\
    &= -\frac12 \frac{1}{\sqrt{\det \bSigma(t)}} \mathrm{Tr}\left(\bSigma^{-1}(t) \dot{\bSigma}(t)\right)~,
\end{align}
where we used the Jacobi formula for the derivative of the determinant and $\dot{\bSigma}(t) = \p_t\bSigma(t)$.
Taking the time derivative of the exponential term involves differentiating both with respect to  $\bmu(t)$  and  $\bSigma(t)$.
Putting all these components together, the time derivative of  $p_t(\bx)$  is given by
\begin{align}
\p_t p_t(\bx)&=  p_t(\bx) \left[ -\frac12 \mathrm{Tr} \left(\bSigma^{-1}(t) \dot{\bSigma}(t)\right)  +(\bx - \bmu(t))^\top \bSigma^{-1}(t) \dot{\bmu}(t) \right.\nonumber\\
&\qquad\qquad\qquad \left.+ \frac12 (\bx - \bmu(t))^\top \bSigma^{-1}(t) \dot{\bSigma}(t) \bSigma^{-1}(t) (\bx - \bmu(t)) \right]~,
\end{align}
where we used the identity 
$\p_t\bSigma^{-1}(t)= -\bSigma^{-1}(t) \dot{\bSigma}(t) \bSigma^{-1}(t)$ as well as the fact that $\bSigma$ is a symmetric matrix. 
The gradient of $p_t(\bx)$ with respect to $\bx$ is
\begin{align}
\nabla p_t(\bx) = -\bSigma^{-1}(t) (\bx - \bmu(t)) p_t(\bx)~,
\end{align}
and thus the divergence term
\begin{align}
\nabla \cdot \left( p_t(\bx) \bv(\bx, t) \right)  = -\bv(\bx, t)^\top \bSigma^{-1}(t) (\bx - \bmu(t)) p_t(\bx) + p_t(\bx) \nabla \cdot \bv(\bx, t)~.
\end{align}
Plugging everything into \eqref{eq:cont_eq} we obtain
\begin{align}
\label{eq:FPE_intermediate}
 & -\frac12 \mathrm{Tr} \left(\bSigma^{-1}(t) \dot{\bSigma}(t)\right) + (\bx - \bmu(t))^\top \bSigma^{-1}(t) \dot{\bmu}(t) + \frac12 (\bx - \bmu(t))^\top \bSigma^{-1}(t) \dot{\bSigma}(t) \bSigma^{-1}(t) (\bx - \bmu(t)) \nonumber\\
&\qquad\qquad\qquad\qquad\qquad\qquad\qquad\qquad\qquad\qquad\qquad\qquad= (\bx - \bmu(t))^\top \bSigma^{-1}(t) \bv(\bx, t) -  \nabla \cdot \bv(\bx, t)~.
\end{align}
Since both sides of the equations are polynomials in $\bx - \bmu(t)$, we seek a polynomial solution of the form \begin{align}
\bv(\bx, t) = \ba_0(t) + \bOmega_1(t)\left( \bx - \bmu(t)\right)~.
\end{align}

Inserting our ansatz into \eqref{eq:FPE_intermediate} we obtain
\begin{align}
      &-\frac12 \mathrm{Tr} \left(\bSigma^{-1}(t) \dot{\bSigma}(t)\right) + \left( \bx - \bmu(t)\right)^\top \bSigma^{-1}(t) \dot{\bmu}(t) + \frac12 \left( \bx - \bmu(t)\right)^\top \bSigma^{-1}(t) \dot{\bSigma}(t) \bSigma^{-1}(t)\left( \bx - \bmu(t)\right) \nonumber\\
&\qquad\qquad\qquad\qquad= \left( \bx - \bmu(t)\right)^\top \bSigma^{-1}(t) \ba_0(t)+ \left( \bx - \bmu(t)\right)^\top  \bSigma(t)^{-1}\bOmega_1(t)\left( \bx - \bmu(t)\right) -  \mathrm{Tr}\left(\bOmega_1(t)\right)~.
\end{align}
Matching terms with equal degree in $\tbx$ yields
\begin{align}
    &\text{0${}^{th}$-order:} &\frac12 \mathrm{Tr} \left(\bSigma^{-1}(t) \dot{\bSigma}(t)\right) &= \mathrm{Tr}\left(\bOmega_1(t)\right)~,\nonumber\\
    &\text{1${}^{st}$-order:} &\bSigma^{-1}(t) \dot{\bmu}(t) &= \bSigma^{-1}(t) \ba_0(t)~, &\Longrightarrow&  &\ba_0(t)&=\dot{\bmu}(t)~,\nonumber\\
    &\text{2${}^{nd}$-order:} &\frac12 \bSigma^{-1}(t) \dot{\bSigma}(t) \bSigma^{-1}(t) &= \bSigma(t)^{-1}\bOmega_1(t)~, &\Longrightarrow&  &\bOmega_1(t)&=\frac12\left[ \dot{\bSigma}(t) \bSigma^{-1}(t)+\bSigma(t)\bPsi(t)\right]~,
\end{align}
where $\bPsi^\top = -\bPsi$ since $\bSigma(t)^{-1}\bOmega_1(t)$ is defined up to a skew-symmetric component. 

The \text{0${}^{th}$-order} condition is also satisfied since
\begin{align}
    \mathrm{Tr}\left( \bSigma(t)\bPsi(t)\right) = -\mathrm{Tr}\left( \bSigma(t)^\top\bPsi(t)^\top\right) =  -\mathrm{Tr}\left( (\bPsi(t)\bSigma(t))^\top\right) =  -\mathrm{Tr}\left( \bPsi(t)\bSigma(t)\right) =  -\mathrm{Tr}\left( \bSigma(t)\bPsi(t)\right) =0~.
\end{align}
Thus the drift velocity field that satisfies the continuity equation takes the form
\begin{align}
    \label{eq:velocity_gaussian}
    \boxed{
    \bv(\bx, t)=\dot{\bmu}(t) \;+\; \frac12 \left[ \, \dot{\bSigma}(t) \, \bSigma^{-1}(t)+\bSigma(t)\, \bPsi(t)\, \right](\bx-\bmu(t))~.
    }
\end{align}
Now, the condition that \eqref{eq:velocity_gaussian} must be a gradient of a potential, $\bv(\bx, t) = \nabla S(\bx, t)$ implies that the linear term matrix is symmetric, i.e., 

\begin{align}
    \bSigma(t)^{-1}\dot{\bSigma}(t)=
    \dot{\bSigma}(t)\bSigma(t)^{-1}- \bSigma(t)\bPsi(t)- \bPsi(t)\bSigma(t)~.
\end{align}
The potential $S(\bx, t)$ then assumes the form 

\begin{equation}
\label{eq:S_ansatz}
S(\bx, t)=\frac{1}{4}\,\bigl(\bx-\bmu(t)\bigr)^\top \bC(t)\, \bigl(\bx-\bmu(t)\bigr)
+\dot{\bmu}(t)\cdot\bigl(\bx-\bmu(t)\bigr)+f(t)~,
\end{equation}
where 
\begin{equation}
\label{eq:continuity_eq_cond}
\bC(t)=\dot{\bSigma}(t)\,\bSigma(t)^{-1}-\bSigma(t)\bPsi(t)~.
\end{equation}
The quantity \eqref{eq:S_ansatz} must satisfy the Quantum Hamilton-Jacobi equation (QHJE)
\begin{equation}
\label{eq:quantumHJE}
\partial_t S(\bx, t) + \frac12\left|\nabla S(\bx, t)\right|^2 = -Q(\bx, t),
\end{equation}
in the presence of the Bohm potential \eqref{eq:bohm}, which, for a Gaussian distribution, takes the form
\begin{align}
Q(\bx, t)  =\beta(t)^2\,\operatorname{Tr}\!\, \Bigl(\bSigma(t)^{-1}\Bigr) - \frac{\beta(t)^2}{2}\bigl(\bx-\bmu(t)\bigr)^\top \bSigma(t)^{-2} \bigl(\bx-\bmu(t)\bigr)~.
\end{align}
Plugging the quantities
\begin{align}
\partial_t S(\bx, t) &= \frac{1}{4}\,\bigl(\bx-\bmu(t)\bigr)^\top \dot{\bC}(t) \bigl(\bx-\bmu(t)\bigr)
-\frac12\,\bigl(\bx-\bmu(t)\bigr)^\top \bC(t)\,\dot{\bmu}(t)\nonumber\\[1mm]
&\quad +\,\ddot{\bmu}(t)^\top\bigl(\bx-\bmu(t)\bigr)
- \left|\dot{\bmu}(t)\right|^2+\dot{f}(t)~,\nonumber\\
\left|\nabla S(\bx, t)\right|^2
&=\frac{1}{4}\,\bigl(\bx-\bmu(t)\bigr)^\top \bC(t)^2 \bigl(\bx-\bmu(t)\bigr)
+\bigl(\bx-\bmu(t)\bigr)^\top \bC(t)\,\dot{\bmu}(t)+\left|\dot{\bmu}(t)\right|^2~,
\end{align}
into \eqref{eq:quantumHJ} yields
\begin{align}
&\frac{1}{4}\,\bigl(\bx-\bmu(t)\bigr)^\top \Bigl[\dot{\bC}(t)+\frac12\bC(t)^2\Bigr] \bigl(\bx-\bmu(t)\bigr)
+\ddot{\bmu}(t)\cdot\bigl(\bx-\bmu(t)\bigr)
+\Bigl\{\dot{f}(t)-\frac12\left|\dot{\bmu}(t)\right|^2\Bigr\} \nonumber\\
&\qquad\qquad\qquad\qquad\qquad\qquad\qquad\qquad =-\beta^2\operatorname{Tr}\!\Bigl(\bSigma(t)^{-1}\Bigr)
+\frac{\beta^2}{2}\,\bigl(\bx-\bmu(t)\bigr)^\top \bSigma(t)^{-2} \bigl(\bx-\bmu(t)\bigr).
\end{align}

Since this equation must hold for all \(\bx\), we equate the coefficients of the various powers of \((\bx-\bmu(t))\)
\begin{align}
\label{eq:QHJE_coeffs}
    & \text{0${}^{th}$-order:} \qquad &\dot{f}(t)-\frac12\left|\dot{\bmu}(t)\right|^2&=-\beta^2\,\operatorname{Tr}\!\, \Bigl(\bSigma(t)^{-1}\Bigr)~, \nonumber\\
    & \text{1${}^{st}$-order:} \qquad &\ddot{\bmu}(t)&=0\qquad \qquad\Longrightarrow\qquad \qquad  \bmu(t)=\bmu_0+\mathbf{v}_0t~, \nonumber\\
    & \text{2${}^{nd}$-order:}  &\dot{\bC}(t)+\frac12\bC(t)^2&=2\beta^2\,\bSigma(t)^{-2}~.
\end{align}
Given its similarity with the famous Riccati equation, we name the quatratic equation in \eqref{eq:QHJE_coeffs}
the \textit{Quantum Riccati Equation}. This equation can 
be cast into a more familiar form of the Riccati equation if we introduce the following complex matrix 
\begin{equation}
\label{eq:complex_CQ}
    \bC_Q(t) = \bC(t) + 2\, i \, \beta \, \bSigma^{-1} (t)~.
\end{equation}
In terms of \eqref{eq:complex_CQ} the Quantum Riccati Equation takes the form 
\begin{equation}
     \dot{\bC}_Q(t) = -\frac12\bC^2_Q(t)~,
\end{equation}
where we employed the (symmetric part) of the continuity equation condition \eqref{eq:continuity_eq_cond}
\begin{equation}
    \dot{\bSigma}(t) = \frac12 \bigl[\,\bC(t) \bSigma(t) +  \bSigma(t)\, \bC(t)\bigr] \qquad \Longrightarrow \qquad \dot{\bSigma}^{-1}(t) = -\frac12  \bigl[\bC(t)  \bSigma^{-1}(t) +  \bSigma^{-1}(t)\, \bC(t)\bigr]~.
\end{equation}
The continuity equation can also be rewritten in the complex form if we use $\bC_Q(t)$ matrix
\begin{align}
\label{eq: quantum_continuity}
    \dot{\bSigma}(t) &=\frac12  \bigl[\bC(t)  \bSigma(t) +  \bSigma(t)\, \bC(t)\bigr]  =  \frac12  \bigl[\, \bSigma(t)\bC_{\mathrm{Q}}(t) + \bC_{\mathrm{Q}}(t) \bSigma(t)\, \bigr] - 2 i \beta  \bI, 
\end{align}
and the whole problem can be stated as 
\begin{empheq}[box=\fbox]{align}
\label{eq:q_system}
\dot{\bC}_Q(t) &= -\frac{1}{2}\bC_Q^2(t)~, 
&& \text{Quantum Riccati Equation} \\
\dot{\bSigma}(t) &= \frac{1}{2} \left[ \bSigma(t)\, \bC_Q(t) + \bC_Q(t)\, \bSigma(t) \right] - 2 i \beta \bI~, 
&& \text{Symmetric Continuity Equation} \label{eq:symmetric} \\
\bPsi(t) &= \frac{1}{2} \left[ \bC_Q(t)\, \bSigma^{-1}(t) - \bSigma^{-1}(t)\, \bC_Q(t) \right], 
&& \text{Skew-Symmetric Cont.~Equation} \\
\bSigma(0) &= \bSigma_0~, \quad \bSigma(1) = \bSigma_1~, 
&& \text{Boundary Conditions}
\end{empheq}

\paragraph{The Quantum Riccati Equation.} We now tackle the first equation in the above system. 
By employing the substitution
\begin{equation}
    \bK = \bC_Q(t)^{-1}
\end{equation}
the equation simplifies to
\begin{align}
    \dot{\bK}(t) &= \frac12 \mathbb{I}~, &\Longrightarrow&  &\bK &= \frac12\mathbb{I} t + \bK_1~, &\Longrightarrow& &\bC_Q(t) &= \left(\frac12 \mathbb{I} t + \bK_1\right)^{-1} ~.
    \label{eq:K_definition}
\end{align}
Note that since $\bC_Q(t)$ is symmetric, $\bK(t)$ and $\bK_1$ are also symmetric matrices.

\paragraph{The Symmetric Continuity Equation.}
Next, we introduce the variable $\bX(t)=\bK(t)^{-1}\,\bSigma(t)\bK(t)^{-1}$
so that
\begin{equation}
\label{eq:x_def}
\bSigma(t)=\bK(t)\,\bX(t)\,\bK(t)~.
\end{equation}
Differentiating $\bSigma(t)$ we obtain
\begin{align}
\dot{\bSigma}(t)
&=\dot{\bK}(t)\,\bX(t)\,\bK(t)+\bK(t)\,\dot{\bX}(t)\,\bK(t)+\bK(t)\,\bX(t)\,\dot{\bK}(t)\nonumber\\
&=\frac12\,\bX(t)\,\bK(t)+\bK(t)\,\dot{\bX}(t)\,\bK(t)+\frac12\,\bK(t)\,\bX(t)~.
\end{align}
Substituting this expression into \eqref{eq:symmetric}
yields
\begin{equation}
\bK(t)\dot{\bX}(t)\bK(t)= -2i\beta(t) \bI~.
\end{equation}
It follows that
\begin{align}
\dot{\bX}(t)=- 2i\beta\bK^{-2}(t)= -2i\beta \bC_Q^2(t) = 4i \beta \dot{\bC}_Q(t)~,
\end{align}
and integrating
\begin{align}
\bX(t) =4i \beta  \bC_Q(t)  + \bX_1 = 4i \beta\Bigl(\frac{t}{2}\,\bI+\bK_1\Bigr)^{-1} + \bX_1~.
\end{align}
where $\bX_1$ is a constant matrix.
Substituting this into \eqref{eq:x_def} we have
\begin{equation}
\bSigma(t)=\bK(t)\left[ \bX_1 + 4i \beta  \Bigl(\frac{t}{2}\,\bI\;+\;\bK_1\Bigr)^{-1} \right]\bK(t)~.
\end{equation}
At $t=0$ we can solve for $\bX_1$ in terms of the other constants
\begin{equation}
\bX_1 =\bK_1^{-1}\bSigma_0 \bK_1^{-1} - 4i \beta\bK_1^{-1}~,
\end{equation}
which yields the expression
\begin{align}
\bSigma(t)&=\Bigl(\frac12 t\mathbb{I}+\bK_1\Bigr)
\Bigl[\bK_1^{-1}\,\bSigma_0 \,\bK_1^{-1}  -4i \beta\, \bK_1^{-1} 4i \beta \Bigl(\tfrac{t}{2}\,\bI\;+\;\bK_1\Bigr)^{-1}\, \Bigr]
\Bigl(\frac12\,t\,\mathbb{I}+\bK_1\Bigr)\nonumber\\
&=\Bigl(\frac12 t\bK_1^{-1} + \bI\Bigr)
\bSigma_0 \Bigl(\tfrac12 \,t\,\bK_1^{-1} + \bI\Bigr)  - 2i\beta t (\tfrac12 \, t \, \bK_1^{-1} +   \, \bI ).
\end{align}
The second boundary conditions implies
\begin{equation}
    \bSigma_1 = \bOmega\, \bSigma_0 \, \bOmega - 2i\beta  \bOmega~,
\end{equation}
where we defined
\begin{equation}
    \bOmega = \Bigl(\tfrac12 \,\bK_1^{-1} + \bI\Bigr)~.
\end{equation}
To solve the equation above we note that if we multiply from both sides by $\bSigma_0^{\tfrac12}$
\begin{equation}
  \bSigma_0^{\tfrac12} \,  \bSigma_1 \, \bSigma_0^{\tfrac12} = (\bSigma_0^{\tfrac12}\bOmega\,\bSigma_0^{\tfrac12} ) \, (\bSigma_0^{\tfrac12}\bOmega\,\bSigma_0^{\tfrac12} ) - 2i\beta \, (\bSigma_0^{\tfrac12}\bOmega\,\bSigma_0^{\tfrac12} )
\end{equation}
this relations holds only if $\bSigma_0^{\tfrac12}\bOmega\bSigma_0^{\tfrac12}$ and $\bSigma_0^{\tfrac12} \bSigma_1 \bSigma_0^{\tfrac12}$ commute. In this case, the equation can be solved by jointly diagonalizing these matrices, and we
obtain
\begin{equation}
  \bSigma_0^{\tfrac12}\bOmega\,\bSigma_0^{\tfrac12} =  i\beta\,\bI \;\pm \; (-\beta^2\bI + \bSigma_0^{\tfrac12}   \bSigma_1 \bSigma_0^{\tfrac12})^{\tfrac12}~,
\end{equation}
and hence
\begin{equation}
  \bOmega = i \beta\bSigma_0^{-1} \pm \bSigma_0^{-\tfrac12}(-\beta^2\bI + \bSigma_0^{\tfrac12}   \bSigma_1 \bSigma_0^{\tfrac12})^{\tfrac12}\bSigma_0^{-\tfrac12}~.
\end{equation}
The expression for $\bSigma(t)$ then reads
\begin{equation}
\label{eq:sigma_partial_app}
    \bSigma(t)
    \;=\;
    \Bigl[(1-t)\,\bI + t\,\bOmega\Bigr]\,
      \bSigma_{0}\,
    \Bigl[(1-t)\,\mathbf{I} + t\,\bOmega\Bigr]
    -2i\beta\,t\,
    \Bigl[(1-t)\,\bI + t\,\bOmega\Bigr]~,
\end{equation}
where
\begin{align}
  \bOmega
  &=
  \bSigma_{0}^{-\frac12} \bG \bSigma_{0}^{-\frac12}
  +i\beta\,\bSigma_{0}^{-1}~,
  &\bG &= \Bigl(\bSigma_{0}^{\frac12}\,\bSigma_{1}\,\bSigma_{0}^{\frac12} - \beta^{2} \bI \Bigr)^{\frac12}~.
\end{align}
Expanding the terms in \eqref{eq:sigma_partial_app}
yields
\begin{align}    
    \bSigma(t)
    &=
  \bSigma_{0}^{-\frac12}
  \Bigl[(1-t)\bSigma_{0} + t\bG\Bigr]^{2}
  \bSigma_{0}^{-\frac12}
  - t^{2} \beta^{2} \bSigma_{0}^{-1}
  +2it\beta  \left[(1-t)\,\bI + t\,\bSigma_{0}^{-\frac12} \bG \bSigma_{0}^{-\frac12}\right] \nonumber\\
  &\quad - 2i\beta t \left[(1-t)\,\bI + t\,\bSigma_{0}^{-\frac12} \bG \bSigma_{0}^{-\frac12}\right] + 2t^{2} \beta^{2} \bSigma_{0}^{-1} \nonumber\\
 &= \bSigma_{0}^{-\frac12}
  \Bigl[(1-t)\,\bSigma_{0} + t\,\bG\Bigr]^{2}
  \bSigma_{0}^{-\frac12}
  + t \beta^{2} \bSigma_{0}^{-1}.
\end{align}
Thus, we obtain the final form for the time-dependent evolution of the covariance matrix for the Quantum Schrödinger bridge problem
\begin{align}
    \bSigma(t)
    \;=\;
    \bSigma_{0}^{-\frac12}\,
      \Bigl[
        (1-t)\,\bSigma_{0}
        + t\,\Bigl(\bSigma_{0}^{\frac12}\,\bSigma_{1}\,\bSigma_{0}^{\frac12}
                   - \beta^{2} \bI
              \Bigr)^{\frac12}
      \Bigr]^{2}
    \,\bSigma_{0}^{-\frac12}
    +
    t\,\beta^{2}\,\bSigma_{0}^{-1}~,
\end{align}
as claimed.
\end{proof}

\section{Bohm Potential of a Gaussian Mixture}
\label{app:bohm_gaussian}

We consider a Gaussian mixture distribution:
\begin{align}
p(\bx) \;=\; \sum_{k=1}^K \alpha_k \,\cN(\bx; \bmu_k, \bSigma_k),
\end{align}
where \( \alpha_k \ge 0 \), \( \sum_{k=1}^K \alpha_k = 1 \), and
\(\cN(\bx; \bmu_k, \bSigma_k)\) is the \(k\)-th Gaussian component.
We define the \emph{responsibilities} (posterior mixture weights):
\begin{align}
w_k(\bx) \;=\; 
\frac{\alpha_k\,\cN(\bx; \bmu_k, \bSigma_k)}{p(\bx)},
\quad\text{so that}\quad
\sum_{k=1}^K w_k(\bx) \;=\; 1.
\end{align}

\paragraph{Single‐Gaussian Bohm Potential.}
if \(p(\bx) = \cN(\bx; \bmu, \bSigma)\), the
\emph{Bohm potential} is defined as
\begin{align}
Q(\bx)
\;=\;
-\,\beta^2 
\Bigl[\Delta \log p(\bx) 
\;+\;
\tfrac12 \,\bigl\|\nabla \log p(\bx)\bigr\|^2
\Bigr].
\end{align}
For a single Gaussian,
\begin{align}
\nabla \log \cN(\bx; \bmu, \bSigma)
&= -\,\bSigma^{-1}\,(\bx - \bmu)~,
&\Delta \log \cN(\bx; \bmu, \bSigma)
&=\; -\,\mathrm{Tr}(\bSigma^{-1}).
\end{align}
Hence
\begin{align}
Q(\bx)
\;=\;
\beta^2
\Bigl[
\mathrm{Tr}\bigl(\bSigma^{-1}\bigr)
\;-\;
\tfrac12\,(\bx-\bmu)^\top
\bigl(\bSigma^{-1}\bigr)^2
(\bx-\bmu)
\Bigr].
\end{align}
We denote this single‐Gaussian Bohm potential by \(Q_k(\bx)\) for the \(k\)-th component.

\paragraph{Score \(\nabla \log p(\bx)\) of the Mixture.}
For the mixture density \(p(\bx) = \sum_k \alpha_k \,\cN_k(\bx)\), we have
\begin{align}
\log p(\bx)
\;=\;
\log\!\Bigl(\,\sum_{k=1}^K \alpha_k\,\cN_k(\bx)\Bigr),
\quad
\text{where we abbreviate }\cN_k(\bx) := \cN(\bx; \bmu_k, \bSigma_k).
\end{align}
Then
\begin{align}
\nabla \log p(\bx)&=\frac{1}{p(\bx)}\;\nabla \!\Bigl(\sum_{k=1}^K \alpha_k\,\cN_k(\bx)\Bigr) \nonumber\\
&=\frac{1}{p(\bx)}\sum_{k=1}^K \alpha_k \cN_k(\bx)  \frac{\nabla\cN_k(\bx)}{\cN_k(\bx)} \nonumber\\
&=\sum_{k=1}^K w_k(\bx)
\,\nabla \log \cN_k(\bx)\nonumber\\
&=\sum_{k=1}^K 
w_k(\bx)
\bigl[-\,\bSigma_k^{-1}\,(\bx - \bmu_k)\bigr]~.
\end{align}
\paragraph{Score squared of the Mixture.}
\begin{align}
    \bigl\|\nabla \log p(\bx)\bigr\|^2 &= 
    \left(\sum_{j=1}^K w_j(\bx)
\,\nabla \log \cN_j(\bx)\right)^\top \sum_{k=1}^K w_k(\bx)
\,\nabla \log \cN_k(\bx) \nonumber\\
&= \sum_{j,k=1}^K w_j(\bx)w_k(\bx) (\bx - \bmu_j)^\top 
\bSigma_j^{-1}\bSigma_k^{-1}(\bx - \bmu_k)~.
\end{align}

\paragraph{Laplacian \(\Delta \log p(\bx)\) of the Mixture.}
\begin{align}
\Delta \log p(\bx)
\;=\;
\sum_{k=1}^K w_k(\bx)
\,\Delta \log \cN_k(\bx)
\;+\;
\sum_{k=1}^K
\bigl[\nabla w_k(\bx)\bigr]^\top 
\bigl[\nabla \log \cN_k(\bx)\bigr].
\end{align}
Rewriting \(\nabla w_k(\bx)\) in terms of \(\nabla \log p(\bx)\) and \(\nabla \log \cN_k(\bx)\), one arrives at the helpful form:
\begin{align}
\Delta \log p(\bx)
\;=\;
\sum_{k=1}^K w_k(\bx)\,\Delta \log \cN_k(\bx)
\;+\;
\sum_{k=1}^K w_k(\bx)
\,\bigl\|\nabla \log \cN_k(\bx)\bigr\|^2
\;-\;
\bigl\|\nabla \log p(\bx)\bigr\|^2.
\end{align}

\paragraph{Mixture Bohm Potential.}

Using the Laplacian identity above, we find
\begin{align}
\Delta \log p(\bx)
\;+\;
\tfrac12 \,\bigl\|\nabla \log p(\bx)\bigr\|^2
\;=\;
\sum_{k=1}^K w_k(\bx)
\,\Delta \log \cN_k(\bx)
\;+\;
\sum_{k=1}^K w_k(\bx)\,\bigl\|\nabla \log \cN_k(\bx)\bigr\|^2
\;-\;
\tfrac12 \,\bigl\|\nabla \log p(\bx)\bigr\|^2.
\end{align}
Inserting that back into \(Q(\bx)\) and grouping terms in a convenient way yields:
\begin{align}
Q(\bx)
&=\;
-\,\beta^2
\biggl[\;
\sum_{k=1}^K w_k(\bx)
\Bigl(\Delta \log \cN_k(\bx)
     \;+\;
     \bigl\|\nabla \log \cN_k(\bx)\bigr\|^2
\Bigr)
\;-\;
\tfrac12 \,\bigl\|\nabla \log p(\bx)\bigr\|^2
\biggr]\nonumber\\
&=\;
\sum_{k=1}^K w_k(\bx)\;
\underbrace{\bigl[-\,\beta^2(\,\Delta \log \cN_k
    + \tfrac12 \|\nabla \log \cN_k\|^2)\bigr]}_{=\,Q_k(\bx)}
\;+\;
\tfrac{\beta^2}{2}
\Bigl[
   \bigl\|\nabla \log p(\bx)\bigr\|^2
   \;-\;
   \sum_{k=1}^K w_k(\bx)\,\bigl\|\nabla \log \cN_k(\bx)\bigr\|^2
\Bigr].
\end{align}
Hence we arrive at the explicit decomposition:
\begin{align}
\boxed{
Q(\bx)
\;=\;
\sum_{k=1}^K 
w_k(\bx)
\,Q_k(\bx)
\;\;+\;\;
\tfrac{\beta^2}{2}
\Bigl[
  \bigl\|\nabla \log p(\bx)\bigr\|^2
  \;-\;
  \sum_{k=1}^K w_k(\bx)\,\bigl\|\nabla \log \cN_k(\bx)\bigr\|^2
\Bigr].
}
\end{align}
This shows that the Bohm potential of a Gaussian mixture is given by a \emph{responsibility‐weighted} sum of the single‐Gaussian Bohm potentials \(\{Q_k\}\) plus an extra ``coupling term'' that reflects the nontrivial mixture‐log‐density structure.

\section{Molecule Translation in Latent Space}
\label{sec: molecules}
The entire experiment was performed on a standard laptop using only the CPU, without any GPU acceleration.
We use a 512 dimensional latent space representation of 7,831 molecules, each annotated with up to 12 toxicity endpoints. For this experiment, we focus on two endpoints with the most abundant annotations: SR-MMP (918 measurements) and NR-AhR (768 measurements). Each molecule is labeled as toxic or non-toxic, corresponding to class indices 1 and 0, respectively.

\paragraph{Classifier Architecture.}
To distinguish toxic from non-toxic molecules, we train a multitask binary classifier using a fully connected feedforward neural network. The model consists of five linear layers with decreasing hidden dimensions: 256, 128, 64, and 32 units. Each layer is followed by a LeakyReLU activation. The final output layer produces a 12-dimensional logit vector, one per toxicity endpoint. The network is trained using the Adam optimizer (default parameters), with a learning rate of $3 \times 10^{-4}$ and a batch size of 1024. Training is performed on a random train/test split using binary cross-entropy loss until convergence.

\paragraph{Classifier Performance.}
The classifier achieves the following performance on the two target endpoints:

NR-AhR: F1 = 0.584, PR-AUC = 0.689, Balanced Accuracy = 0.723, MCC = 0.530

SR-MMP: F1 = 0.674, PR-AUC = 0.740, Balanced Accuracy = 0.793, MCC = 0.611

\paragraph{Latent Space Translation.}
We apply our GMM-based Schrödinger Bridge model to translate molecules in latent space between the non-toxic and toxic classes. The model is trained with 30 Gaussian wavepackets, using $\beta = 0.01$, a batch size of 10, and a learning rate of $10^{-3}$. Training converges in approximately 10,000 epochs.

Due to the broader distribution of non-toxic molecules in latent space and the concentrated nature of toxic molecules, we focus on forward translation: from non-toxic to toxic. A classification threshold of 0.5 is used to distinguish class membership. For each non-toxic molecule, we generate 1, 5, and 10 samples from the learned transport distribution to evaluate the likelihood of crossing into the toxic region and study saturation effects. An interesting application of our approach would be to combining it with explainability approaches \cite{bertolini2024enhancing, bertolini2023explaining} to gain further insights into the model rationale for generating the chosen trajectory in latent space, which should amount to explain which chemical matter modes are associated a high likelihood in the learned ``toxic'' distribution. 

\section{Mean-Field Games and Lagrangian Minimization}
\label{sec: mfg_gaussian}
Our approach is based on Lagrange minimization principle. Among all admissible trajectories $\bx(t)$ connecting two fixed points $\bx(0) = \bx_0$ and $\bx(1) = \bx_1$ over a fixed time interval $t \in [0, 1]$, the optimal trajectory minimizes the action functional
\begin{align}
\mathcal{A}[\bx(t)] = \int_0^1 \mathcal{L}(\bx(t), \dot{\bx}(t), t) \, \dd t,
\end{align}
where \( \mathcal{L}(x, \dot{\bx}, t) \) is the Lagrangian, typically of the form
\begin{align}
\mathcal{L}(x, \dot{\bx}, t) = \frac{1}{2} \|\dot{\bx}(t)\|^2 - V(\bx(t), t),
\end{align}
with $V(\bx, t)$ denoting a potential energy function. Considering a single gaussian, we can significanly simplify our formulas for kinetic and potential energies, making the algorithm attractive in its simplicity and flexibility.

Let the population density be Gaussian
\begin{align}
p(\bx,t)=\mathcal N\!\bigl(\bmu(t),\bSigma(t)\bigr),\qquad
\;t\in[0,1],
\end{align}
with boundary marginals $p_0(\bx)$ and $p_1(\bx)$ also being Gaussian. Individual samples from the distribution evolve according to the law
\begin{equation}
\label{eq: population_update_mfg_app}
\bx_{i+1} = \bmu(t_i + 1) + \sqrt{1 -2 \beta} \, \bSigma(t_{i+1})^{\frac12}\bSigma(t_i)^{-\frac12} (\bx_i - \bmu(t_i)) + \sqrt{2\beta}\, \dd \bW~,
\end{equation}
ensuring that at each time $t_i$ the sample population has mean $\bmu(t_i)$ and variance $\bSigma(t_i)$.
For the general form of the drift velocity \ref{eq:velocity_gaussian}
with arbitrary antisymmetric matrix $\bPsi(\bx, t)$
\begin{align}
\bv(\bx,t)&=
\dot{\bmu}(t)\, +\, 
\frac12\!\bigl[\dot{\bSigma}(t)\bSigma^{-1}(t)+\bSigma(t)\bPsi(t)\bigr]
\bigl(\bx-\bmu(t)\bigr)~,\nonumber\\
\bu(\bx,t)&=\beta\nabla\!\log p(\bx,t)
          =-\beta\,\bSigma^{-1}(t)\bigl(\bx-\bmu(t)\bigr)~.
\end{align}
As before we introduce the \emph{symmetric} matrix
\begin{align}
\bC(t):=\bigl[\dot{\bSigma}(t)\bSigma^{-1}(t)+\bSigma(t)\bPsi(t)\bigr]
        =\bC^{\!\top}(t),\quad
\by:=\bx-\bmu(t)\sim\mathcal N\!\bigl(0,\bSigma(t)\bigr).
\end{align}
The Quantum Schrödinger Bridge Lagrangian
\begin{align}
\mathcal L_{\mathrm{QSB}}
=
\int_{0}^{1}\!\!\int_{\mathbb{R}^{n}}
\bigl(\|\bv(\bx,t)\|^{2}-\|\bu(\bx,t)\|^{2}\bigr)\,
p(\bx,t)\,\dd\bx\,\dd t
\end{align}
simplifies to the \textbf{kinetic–potential form}
\begin{equation}
\boxed{%
\mathcal L_{\mathrm{QSB}}
=
\int_{0}^{1}
\Bigl[
\underbrace{\|\dot{\bmu}(t)\|^{2}
+\frac14\, \operatorname{Tr}\!\bigl(\bC(t)\bSigma(t)\bC(t)\bigr)}_{\displaystyle K(t)}
-
\underbrace{\beta^{2}\,\operatorname{Tr}\!\bigl(\bSigma^{-1}(t)\bigr)}_{\displaystyle U(t)}
\Bigr]\dd t }.
\label{eq:LQSB_general}
\end{equation}

\paragraph{Diagonal‐\(\bSigma\) special case.}
If \(\bSigma(t)\) is diagonal $\bC(t) = \dot{\bSigma}(t)\bSigma(t)^{-1}$ and one can write down the kinetic energy as 
\begin{align}
K(t) = 
|\dot{\bmu}(t)\|^{2}
+\frac14\, \operatorname{Tr}\!\bigl(\dot{\bSigma}(t) \, \dot{\bSigma}(t)\, \bSigma(t)^{-1}) 
\end{align}
and $\mathcal L_{\mathrm{QSB}}$ reduces to 
\begin{equation}
\mathcal L_{\mathrm{QSB}}
=
\int_{0}^{1}
\Bigl[
\underbrace{\|\dot{\bmu}(t)\|^{2}
+\frac14\, \operatorname{Tr}\!\bigl(\dot{\bSigma}(t)\dot{\bSigma}(t)\,\bSigma(t)^{-1})}_{\displaystyle K(t)}
-
\underbrace{\beta^{2}\,\operatorname{Tr}\!\bigl(\bSigma^{-1}(t)\bigr)}_{\displaystyle U(t)}
\Bigr]\dd t~.
\label{eq:LQSB_general_diag}
\end{equation}

\subsection{Experiment Specifics}
The entire experiment was performed on a standard laptop using only the CPU, without any GPU acceleration.
The objective $\mathcal{L}_{\mathrm{QSB}}$ serves as the loss function in our search for the optimal trajectory. In our two-dimensional "S-tunnel" and "V-neck" experiments, we fix the initial and final means of the population distributions at $t=0$ and $t=1$, respectively. We introduce learnable parameters $\mu(t_i)$ and $\Sigma(t_i)$ for discrete time points $t_i = 0, \dd t, \ldots, 1$. For trajectory learning, we use 100 time steps.

To prevent particles from passing through obstacles, we add a penalization term $\lambda_{\mathrm{obs}} \, \mathcal{L}_{\mathrm{obs}}$, resulting in a total loss function of
\begin{align}
\mathcal{L}_{\mathrm{total}} = \mathcal{L}_{\mathrm{QSB}} + \lambda_{\mathrm{obs}} \, \mathcal{L}_{\mathrm{obs}}~.
\end{align}
The hyperparameter $\lambda_{\mathrm{obs}}$ is chosen so that both terms in the loss are of comparable magnitude, which empirically yields the best performance (in our case $\lambda_{\text{obs}} = 5000$).

In S-tunnel experiment initial population is given by  $\mu(0) = (0, 0)$ and $\mu(1) = (20, 0)$. Two elliptical obstacles are placed at $(6, -4.5)$ and $(14, 4.0)$, each with semi-axes $a = 2.0$ and $b = 10.0$, and the domain is bounded within $[0, 20] \times [-10, 10]$.

In the U-tunnel experiment, the initial population is given by \( \mu(0) = (0, 0) \) and \( \mu(1) = (20, 4) \). Two circular obstacles are placed at \( (10, 8.0) \) and \( (10, -4.0) \), each with semi-axes \( a = 5.0 \) and \( b = 5.0 \), and the domain is bounded within \( [0, 20] \times [-10, 10] \).

A collision-free reference path is generated via RRT*, reparametrized by arc length, and used to initialize the mean trajectory $\bmu(t)$ over $n = 100$ time steps. Both $\bmu(t)$ and $\log \Sigma(t)$ are optimized.

We use $\beta = 0.05$, learning rate $10^{-3}$, AdamW optimizer, and batch size $1000$.

\end{document}